\documentclass{article}
\usepackage[final]{neurips}
\usepackage{float}
\usepackage{placeins}
\usepackage{booktabs}    % 三线表
\usepackage{multirow}    % 合并单元格
\usepackage{pgfplots}    % 柱状图
\pgfplotsset{compat=1.18}
\usepackage{graphicx}
\usepackage{bbm}
\usepackage{listings}
\usepackage[most]{tcolorbox}
\usepackage{makecell}

\usepackage{amsmath, amsthm, amssymb}
\usepackage{algorithm}
\usepackage{algpseudocode}
\usepackage[shortlabels]{enumitem}
\usepackage{geometry}
\usepackage{subcaption}

\usepackage{tikz}
\usetikzlibrary{shapes.geometric, arrows.meta, positioning, fit, backgrounds}

\usepackage[utf8]{inputenc} % allow utf-8 input
\usepackage[T1]{fontenc}    % use 8-bit T1 fonts
\usepackage{hyperref}       % hyperlinks
\usepackage{url}            % simple URL typesetting
\usepackage{amsfonts}       % blackboard math symbols
\usepackage{nicefrac}       % compact symbols for 1/2, etc.
\usepackage{microtype}      % microtypography
\usepackage{xcolor}    % colors

\newcommand{\best}[1]{\textbf{\textcolor{black}{#1}}}

\DeclareMathOperator*{\argmax}{arg\,max}

\newtheorem{theorem}{Theorem}[section]
\newtheorem{lemma}[theorem]{Lemma}
\newtheorem{corollary}[theorem]{Corollary}
 
\theoremstyle{definition}
\newtheorem{assumption}{Assumption}[section]

\theoremstyle{remark}
\newtheorem{remark}{Remark}[section]
 
\newcommand{\Dcal}{\mathcal{D}_{\mathrm{cal}}}
\newcommand{\Acal}{\mathcal{A}}
\newcommand{\Zcal}{\mathcal{Z}}
\newcommand{\Xcal}{\mathcal{X}}
\newcommand{\Ycal}{\mathcal{Y}}
\newcommand{\Ccal}{\mathcal{C}}
\newcommand{\PP}{\mathbb{P}}
\newcommand{\EE}{\mathbb{E}}
\newcommand{\Rtrue}{R_{\mathrm{true}}}
\newcommand{\Rhat}{\hat{R}}
\newcommand{\lthresh}{\hat{\lambda}_{\mathrm{thresh}}}

\title{CorePath: A Breast-Specialized Pathology Foundation Model for Core Needle Biopsy Diagnosis and Risk-Controlled Report Generation}

\author{%
\textbf{Ting Yin}$^{1,2}$\thanks{Equal contribution.} \quad
\textbf{Danning Li}$^{3}$\footnotemark[1] \quad
\textbf{Chen Shu}$^{4}$ \quad
\textbf{Xiaoxia Yao}$^{1}$ \quad
\textbf{Boyu Fu}$^{5}$ \quad
\textbf{Yujing Chang}$^{1}$ \\
\textbf{Tianyu Shi}$^{4}$ \quad
\textbf{Mengna Feng}$^{1}$ \quad
\textbf{Jie Chen}$^{2}$ \quad
\textbf{Jing Fu}$^{6}$ \quad
\textbf{Xiuli Xiao}$^{7,8}$ \quad
\textbf{Tianlin Li}$^{7}$ \\
\textbf{Mumin Shao}$^{9}$ \quad
\textbf{Jiaxin Bi}$^{9}$ \quad
\textbf{Wenchuan Zhang}$^{1,2}$ \quad
\textbf{Xiaoyan Wu}$^{1,2}$ \quad
\textbf{Xiao Han}$^{10}$\thanks{Corresponding authors.} \\
\textbf{Zhang Zhang}$^{1,2}$\footnotemark[2] \quad
\textbf{Yuhao Yi}$^{1,2,4}$\footnotemark[2] \quad
\textbf{Hong Bu}$^{1,2}$ \\
{
\begin{minipage}{\textwidth}
\centering
$^1$Department of Pathology, West China Hospital, Sichuan University, Chengdu, China; 
$^2$Institute of Clinical Pathology, West China Hospital, Sichuan University, Chengdu, China;
$^3$The Hong Kong University of Science and Technology (Guangzhou); 
$^4$College of Computer Science, Sichuan University; 
$^5$Sichuan University-Pittsburgh Institute, Sichuan University; 
$^6$Department of Pathology, Sichuan Provincial People's Hospital, Chengdu, China; 
$^7$Department of Pathology, The Affiliated Hospital of Southwest Medical University, Luzhou, China; 
$^8$Department of Pathology, The Fourth Affiliated Hospital of Southwest Medical University, Meishan, China; 
$^9$Department of Pathology, Shenzhen Traditional Chinese Medicine Hospital, The Fourth Clinical Medical College of Guangzhou University of Chinese Medicine, Shenzhen, China;
$^{10}$College of Biomedical Engineering, Sichuan University, Chengdu, China\\
\texttt{yuhaoyi@scu.edu.cn, zhangzhang714@163.com, xiao\_han@scu.edu.cn}
\end{minipage}}
}

\begin{document}

\maketitle
% \nolinenumbers

\begin{abstract}
Breast core needle biopsy (CNB) is central to breast cancer diagnosis yet remains challenging because limited tissue sampling, lesion heterogeneity, and subtle morphologic overlap can obscure subtype distinctions. We developed \textbf{CorePath}, a breast-specialized multimodal pathology foundation model fine-tuned from PRISM using 7901 paired CNB whole-slide images and diagnostic reports from two centers. Evaluated across six CNB cohorts and two public breast pathology benchmarks without task-specific retraining, CorePath consistently outperformed PRISM across cancer detection, invasion assessment, and histological subtyping. It achieved weighted area under the receiver operating characteristic curves (AUCs) of 0.9526-0.9735 for five-class CNB histological subtyping across private centers. On public benchmarks, CorePath outperformed leading pathology foundation models, achieving the highest weighted AUCs of 0.7780 for BCNB invasive carcinoma subtyping, 0.8178 for BRACS lesion stratification, and 0.8252 for BRACS fine-grained classification. In report generation, CorePath reduced the overall non-breast hallucinations from 30.1\% to 2.8\%, demonstrating improved domain fidelity after breast-specific adaptation. 
\textbf{CorePath-CRG} further combined conformal subtype-confidence gating with Learn-Then-Test risk control to enable selective report release, subtype-level fallback, and deferral. CorePath-CRG achieved zero non-breast hallucinations among released outputs and showed the strongest overall performance in pathologist-validated LLM-based Evaluation Scores and quantitative report-generation metrics across most centers. These results demonstrate that domain-specialized foundation models with statistical risk control offer a promising approach for accurate breast CNB diagnosis and reliable report generation.

\end{abstract}

\section{Introduction}
Breast cancer is the most commonly diagnosed malignancy and a leading cause of cancer-related death among women worldwide~\cite{gradishar2024breast,Bray2024GlobalCS}. Percutaneous core needle biopsy (CNB) is widely used as the initial diagnostic procedure for breast cancer, particularly for non-palpable lesions~\cite{yang2023histological}. Despite its central role in clinical management, CNB remains a diagnostically challenging setting~\cite{collins2021precision,shaaban2025diagnostic}. Unlike surgical excision specimens, CNB provides limited tissue and may incompletely sample heterogeneous lesions, thereby reducing diagnostically useful morphologic context~\cite{bilous2010breast}. Interpretation of CNB is further complicated by subtle morphologic overlap among diagnostically adjacent entities and by cross-institutional variation in tissue handling, staining, scanning, and reporting~\cite{quintana2018assessing,wang2025self}. Consequently, breast CNB interpretation is expertise-intensive and contributes substantial workload in routine pathology services~\cite{shaaban2025diagnostic}.

Recent advances in pathology foundation models and multimodal generative systems have created new opportunities for artificial intelligence (AI)-assisted diagnosis and report drafting from whole-slide images (WSIs)~\cite{huang2023visual, lu2024avisionlanguage, shaikovski2024prism}. By learning transferable visual-language representations from large-scale histopathology data, these models have shown encouraging performance in slide-level understanding and related downstream tasks. However, their direct application to routine breast CNB diagnosis remains challenging. Existing pathology foundation models are generally developed for broad pathology applications rather than for the specialized diagnostic setting of breast CNB. Moreover, although multimodal pathology systems can generate coherent free-text outputs, their clinical adoption in high-risk diagnostic workflows requires both reliable uncertainty estimation and the ability to defer cases that cannot be assessed with sufficient confidence~\cite{olsson2022estimating}.

Fine-tuning refers to adapting a pretrained foundation model to a target clinical task using domain-specific supervision, while parameter-efficient approaches aim to retain broad pretrained knowledge by updating only a small subset of model parameters~\cite{Sung_2022_CVPR}. Paired WSI-report data provide a natural source of supervision by linking tissue morphology with case-level diagnostic interpretation~\cite{ding2025multimodal}. Motivated by these considerations, we fine-tuned PRISM using parameter-efficient fine-tuning on paired breast WSIs and reports to develop \textbf{CorePath}, a breast CNB pathology foundation model designed to improve the alignment between slide-level histomorphologic representations and downstream breast CNB diagnosis while preserving the general capabilities of the pretrained model~\cite{shaikovski2024prism}.

Importantly, improved alignment and stronger predictive performance alone are not sufficient for safe clinical integration. Many existing AI models generate deterministic point predictions without explicit uncertainty quantification, making it difficult for clinicians to assess the reliability of individual model outputs~\cite{olsson2022estimating}. This concern is particularly relevant for report generation, where clinically plausible language may still contain unsupported or hallucinated content~\cite{wang2025semantic}. Conformal prediction provides a statistical framework for attaching calibrated uncertainty estimates to model outputs, thereby enabling explicit control of predictive risk and the detection of unreliable predictions~\cite{vovk2005algorithmic, angelopoulos2025learn}. Previously, conformal prediction-based methods have found pathology applications in classification and segmentation tasks, but they have remained unexplored for open-ended pathology report generation~\cite{ olsson2022estimating,Zhang2026_TRUECAM,WHSJ21}. Building on this principle, we develop \textbf{CorePath-CRG}, a \textbf{Conformalized} \textbf{Report} \textbf{Generation} extension of \textbf{CorePath} for uncertainty-aware classification and report generation. By issuing outputs only when confidence is sufficient and abstaining otherwise, CorePath-CRG is intended to support safer and more clinically deployable AI-assisted reporting for breast CNB, particularly across heterogeneous real-world settings.

Here, we developed CorePath, a breast-specialized pathology foundation model adapted from a multimodal pathology foundation model through paired breast CNB WSI-report supervision and parameter-efficient Perceiver-only fine-tuning. We further developed CorePath-CRG, a risk-controlled report-generation framework designed to improve the reliability of pathology report generation. We evaluated CorePath across multicenter private cohorts and public breast pathology benchmarks to determine whether it can support breast disease classification at progressively finer diagnostic granularities, ranging from cancer detection and invasion-status categorization to histological subtyping. In addition, we assessed CorePath-CRG for hallucination control and clinical report quality. Collectively, these models provide a clinically oriented framework for breast CNB interpretation, aiming to improve diagnostic accuracy, cross-institutional robustness, and reporting reliability in high-risk clinical scenarios.

\section{Methods}
\label{sec:methodology}
\subsection{Breast Core Needle Biopsy Samples and Cohorts}
A total of 10946 Hematoxylin and Eosin (H\&E)-stained breast CNB slides were retrospectively included. All slides were collected from West China Hospital (WCH), Sichuan Provincial People's Hospital (SPH), the Affiliated Hospital of Southwest Medical University (SWH), West China Tianfu Hospital (WTH), Chengdu Shang Jin Nan Fu Hospital (SJH), and Shenzhen Traditional Chinese Medicine Hospital (SZH), forming training and test cohorts  spanning from June 2019 to September 2024, enabling a comprehensive evaluation of cross-center generalization under distribution shifts~\cite{aubreville2023mitosis, ehteshami2017diagnostic}. The WCH cohort was split into WCH-1 (n=5720 WSIs, June 2019 to April 2024) and WCH-2 (n=496 WSIs, May 2024 to August 2024). The SPH cohort was split into SPH-1 (n=2181 WSIs, January 2023 to April 2024) and SPH-2 (n=612 WSIs, May 2024 to September 2024). The WSI-report pairs of WCH-1 and SPH-1 were used for fine-tuning PRISM. Table~\ref{tab:dataset_summary} summarizes the characteristics of these datasets and details their training, calibration, and testing splits. The overview of the above datasets is shown in Table~\ref{tab:diagnostic_task_summary_SPH-2}-~\ref{tab:diagnostic_task_summary_SZH}. The inclusion criteria were as follows: (1) having undergone a breast biopsy with a definitive pathological diagnosis and (2) each slide should be clear and undamaged. Patients were excluded if the corresponding biopsy histopathological sections were unavailable. This study was approved by the ethics committee of participating hospitals (No. 997 in 2025) and abided by the Declaration of Helsinki before using tissue samples for scientific research purposes only. The requirement to obtain informed consent from the participants was waived by the ethics committee.

\begin{table}[ht]
\centering
\caption{Summary of datasets used for model fine-tuning, conformal calibration, risk control, and zero-shot evaluation.}
\label{tab:dataset_summary}
\renewcommand{\arraystretch}{1.12}
\begin{tabular}{p{2.5cm} c l p{6.5cm}}
\toprule
Datasets & WSIs & Usage & Tasks \& Applications \\
\midrule
\multicolumn{4}{l}{\textit{Model development}} \\
\midrule
SPH-1 \& WCH-1 & 7901 & Fine-tuning 
& CorePath construction \\
SPH-2 & 612 & Calibration; Test 
& CorePath-CRG construction; hierarchical diagnosis\textsuperscript{*} \\
\midrule
\multicolumn{4}{l}{\textit{Independent private validation cohorts}} \\
\midrule
SWH & 1392 & Test 
& Report generation; hierarchical diagnosis\textsuperscript{*} \\
WCH-2 & 496 & Test 
& Report generation; hierarchical diagnosis\textsuperscript{*} \\
WTH & 282 & Test 
& Report generation; hierarchical diagnosis\textsuperscript{*} \\
SJH & 128 & Test 
& Report generation; hierarchical diagnosis\textsuperscript{*} \\
SZH & 135 & Test 
& Cancer detection \\
\midrule
\multicolumn{4}{l}{\textit{Independent public validation cohorts}} \\
\midrule
BCNB & 1058 & Test 
& Invasive carcinoma subtype categorization \\
BRACS & 547 & Test 
& Lesion stratification; fine-grained classification \\
\bottomrule
\multicolumn{4}{l}{\footnotesize \textsuperscript{*}Hierarchical diagnosis includes cancer detection, invasion assessment, and histological subtyping.} \\
\end{tabular}
\end{table}
\FloatBarrier

\subsection{Whole-Slide Image and Report Preprocessing}
All breast CNB WSIs underwent standardized tissue detection and patch extraction before feature encoding. Background regions and non-informative areas were excluded, and tissue-containing regions were tiled into patches matching the input resolution required by each model. For TITAN, CONCH patch-level features were extracted following the TITAN pipeline~\cite{ding2025multimodal}. For PRISM and CorePath, WSI-level representations were obtained from patch features generated by the Virchow patch encoder~\cite{shaikovski2024prism}.

For report preprocessing, each WSI was paired with its original diagnostic pathology report. Because the reports were written in Chinese and exhibited substantial heterogeneity in wording and formatting across institutions, DeepSeek-R1-Distill-Qwen-32B was first used to extract structured diagnostic information, which was subsequently used to define diagnostic labels for downstream classification tasks~\cite{deepseekai2025deepseekr1incentivizingreasoningcapability}. In parallel, the original Chinese diagnostic information was translated into English using OpenBioLLM-70B,  an open-source biomedical large language model, to provide supervision for WSI-report alignment~\cite{OpenBioLLMs}. When needed, the translated reports were manually reviewed to ensure the preservation of key diagnostic entities. Patient-identifying information was not retained in the processed text. Ultimately, our final datasets comprise paired WSI and pathology reports, with each WSI uniquely associated with one diagnostic report.

\subsection{Development of CorePath}
We developed CorePath by fine-tuning the PRISM foundation model on 7901 breast CNB WSI-report pairs collected from WCH-1 and SPH-1 cohorts~\cite{shaikovski2024prism}. These data covered routine diagnostic breast pathology cases and were used exclusively for supervised multimodal fine-tuning, with no overlap with any downstream evaluation or calibration datasets. This strict separation was designed to ensure that subsequent zero-shot classification and report-generation evaluations assessed genuine out-of-distribution generalization. 

In this study, we adopted a lightweight \textbf{Perceiver-only} fine-tuning strategy. All trainable parameters were restricted to the slide encoder, which was implemented as a Perceiver module for aggregating patch-level features into slide-level representations~\cite{jaegle2021perceiver}. The text encoder and all cross-modal components outside the slide encoder were kept frozen throughout training~\cite{kirkpatrick2017overcoming}. By restricting adaptation to the visual encoder, CorePath was encouraged to learn breast-specific histopathological representations while preserving the linguistic embedding space inherited from pretraining~\cite{sung2022vl}. Notably, keeping the language components frozen prevents the model from overfitting to fixed linguistic patterns in the training corpus and exploiting text distributional biases that are irrelevant to the visual content.

Parameter-efficient fine-tuning was implemented by inserting adapters into targeted feed-forward sublayers within the Perceiver architecture~\cite{houlsby2019parameter}.  These targeted modules include the feed-forward network of the final Transformer block, as well as those associated with the cross-attention and aggregation stages. Consequently, parameter updates are strictly confined to this compact subset of layers, while all remaining parameters of PRISM remain frozen. Comprehensive implementation details, including specific layer mappings, hyperparameter configurations, and learning rates, are provided in Supplementary Methods~\ref{app:implementation_details}.

This strictly constrained fine-tuning protocol ensures that adaptation focuses on visual pattern abstraction at the whole-slide level, rather than the memorization of textual templates or reporting conventions. As a result, CorePath was suitable for downstream zero-shot evaluation, uncertainty-aware prediction, and risk-controlled inference across heterogeneous clinical centers~\cite{zhou2022learning}.

\subsection{Zero-Shot Diagnostic Classification Evaluation of CorePath}
To assess the diagnostic generalization and practical adaptability of CorePath, we evaluated the model in a zero-shot setting without task-specific retraining. Zero-shot classification assigns slides to predefined diagnostic categories using text prompts, rather than a task-specific classifier trained on labeled examples for each category set~\cite{radford2021learning}. This strategy is well suited to breast CNB diagnosis, where hierarchical and workflow-dependent label spaces make exhaustive task-specific annotation impractical. For each task, we defined candidate diagnostic prompts and performed subtype prediction without updating model parameters. The prompts used in the zero-shot experiments are listed in Tables~\ref{tab:binary_prompt_class_names}-\ref{tab:bracs_seven_class_prompt_class_names}.

We evaluated PRISM, TITAN, and CorePath on private datasets from six independent centers, including SPH-2, SWH, WCH-2, WTH, SJH, and SZH, comprising 612, 1392, 496, 282, 128, and 135 cases, respectively. All private evaluation cohorts consisted of breast CNB specimens, consistent with the tissue context of the fine-tuning data, but none overlapped with the WCH-1 or SPH-1 cohorts used for CorePath construction. As summarized in Table~\ref{tab:dataset_summary}, the private cohorts were used to evaluate a clinically progressive hierarchical diagnosis setting. For the internal cohorts, hierarchical diagnosis comprised three diagnostic levels: cancer detection, invasion assessment, and histological subtyping. Cancer detection distinguished cancer from noncancer. Invasion assessment separated noncancer, carcinoma in situ, and invasive breast carcinoma. Histological subtyping distinguished noncancer, carcinoma in situ, invasive ductal carcinoma (also referred to as invasive breast carcinoma of no special type, invasive breast carcinoma NOS), invasive lobular carcinoma, and other invasive breast carcinoma. These diagnostic levels were designed to reflect the stepwise workflow of breast CNB evaluation, ranging from initial cancer screening to assessment of invasion status and histological subtype determination. This design enabled us to examine whether CorePath improves not only coarse-level cancer detection but also more nuanced diagnostic distinctions that directly inform pathology reporting and downstream clinical management. The SZH cohort was available only for cancer detection and invasion assessment.

Two public benchmarks were also included for external evaluation. The BCNB dataset included 1,058 CNB cases and was used for invasive carcinoma subtype categorization, with invasive ductal carcinoma, invasive lobular carcinoma, and other invasive breast carcinoma as candidate labels~\cite{xu2021predicting}. The BRACS dataset included 547 H\&E-stained mastectomy or CNB specimens and was used for lesion stratification and fine-grained breast lesion classification~\cite{brancati2022bracs}. In the lesion stratification task, BRACS cases were grouped into benign, atypical, and malignant lesions. In the fine-grained classification task, the candidate labels comprised normal tissue, pathological benign lesion, usual ductal hyperplasia, flat epithelial atypia, atypical ductal hyperplasia, ductal carcinoma in situ, and invasive carcinoma. Because the label systems and specimen contexts differed between the private and public datasets, the public benchmarks were used primarily to assess representational transferability and external robustness rather than direct label-level comparability. 

\subsection{Development of CorePath-CRG for Reliable Report Generation}
CorePath-CRG was developed as a conformalized and risk-controlled extension of CorePath for reliable pathology report generation in breast CNB. CorePath-CRG included three operational stages: conformal calibration of diagnostic predictions, report scoring and Learn-Then-Test (LTT)-based threshold calibration, and three-tier selective release. Methodologically, these stages integrated five key elements in sequence: task-specific conformal calibration, Self-confidence Score-based candidate-report assessment, a separate ground-truth-aligned Judge Score for calibration-risk definition, LTT-based calibration of the release threshold, and agent-based finalization of diagnostic statements~\cite{vovk2005algorithmic,angelopoulos2025learn}. The SPH-2 cohort was used as an independent calibration cohort for both diagnostic-confidence calibration and report-release calibration, whereas the SWH, WCH-2, WTH, and SJH cohorts were used only for subsequent report-quality evaluation (Table~\ref{tab:dataset_summary}).

For each WSI, CorePath first generated diagnostic probabilities through zero-shot inference, together with multiple candidate pathology reports. In the first stage, diagnostic confidence was calibrated from the probabilities produced by CorePath. For cancer detection, the probability of malignancy was compared with conformal lower and upper thresholds. For histological subtyping, the confidence margin between the two most probable subtypes was compared with a conformal margin threshold. These conformal rules retained reliable subtype-level predictions as auxiliary diagnostic evidence, whereas unstable classifier predictions were masked as ``Unknown'' and were not used as subtype-level evidence for report scoring or synthesis.

In the second stage, candidate reports were assessed using the Self-Confidence Score, which served as an internal report-level reliability measure for CorePath-CRG. The Self-Confidence Score evaluated the clinical plausibility, breast-pathology relevance, diagnostic completeness, inter-report consistency, and linguistic coherence of candidate reports without access to ground-truth reference reports at inference. This score was combined with the conformal diagnostic confidence margin to obtain a fused release score $S_i$. During calibration, a separate ground-truth-aligned Judge Score was used to define binary risk labels for candidate outputs, and the LTT framework was applied to select a calibrated release threshold that controlled the risk of automatically released reports at a prespecified level.

In the third stage, CorePath-CRG applied a three-tier selective release mechanism at inference. Cases with $S_i$ above the calibrated release threshold were routed to trusted synthesis, in which a synthesizer agent integrated the retained candidate reports and the conformal-filtered diagnostic prediction into a concise diagnostic statement for automatic release. Cases below the release threshold but with a retained conformal classifier prediction were routed to a conservative subtype-only fallback pathway, in which unsupported narrative text was discarded and only the subtype-level diagnostic label was returned. Only when both the generated narrative failed the release criterion and the conformal classifier lacked a confident diagnostic label did the system output ``Unknown'' and defer the case for pathologist review.

This three-stage design separated reference-based calibration from inference-time release decisions and agent-based output construction. During calibration, the Judge Score was used only to define calibration risk for LTT and select the release threshold. During inference, the Self-confidence Score and conformal diagnostic margin were combined into a fused release score, which was compared with the LTT-calibrated release threshold to decide whether CorePath-CRG should release the generated narrative, fall back to a subtype-level output, or defer the case. The Synthesizer Agent then constructed the final response according to this release decision and the available diagnostic evidence. Detailed implementations of conformal calibration, the Self-confidence Score, the Judge Score, LTT-based risk control, and agent-based synthesis and fallback handling are provided in Supplementary Methods~\ref{conformal}, \ref{subsec:prism_score_details}, \ref{subsec:judge_score_details}, \ref{riskcontrol}, and \ref{subsec:agent_synthesis_details}, respectively, with the associated Supplementary Figures illustrating the Self-Confidence Score prompt (Figure~\ref{fig:prism_prompt}), the Judge Score prompt (Figure~\ref{fig:judge_prompt}), and the Synthesizer Agent prompt (Figure~\ref{fig:agent_prompts}).

\subsection{Report Quality Assessment}
We evaluated generated pathology reports across four independent medical centers, including SWH, WCH-2, WTH, and SJH. To evaluate report generation performance, we compared the reports generated by PRISM, CorePath, and CorePath-CRG. Question-answering foundation models were omitted from the comparison, as their responses are inherently contingent upon the specific questions posed. Report quality was assessed from three complementary perspectives: domain-level hallucination, reference-based textual similarity, and clinical semantic quality. For hallucination analysis, we focused on severe non-breast hallucinations, defined as generated reports containing disease entities or diagnostic descriptions unrelated to breast pathology, thereby reflecting a fundamental misidentification of the pathological domain. Reports generated by PRISM and CorePath were evaluated by a Qwen3 model\footnote[1]{We use the official release checkpoint Qwen3-30B-A3B-Instruct-2507.} to identify any \textit{non-breast hallucinations}~\cite{qwen3technicalreport}. CorePath-CRG was designed for Conformalized Report Generation that incorporates constrained decoding and domain-specific guardrails to decrease hallucination rate. However, evaluation for CorePath-CRG is excluded from the primary comparison to maintain a fair evaluation under identical unconstrained generation conditions.

To evaluate clinical utility and semantic fidelity beyond surface-level lexical overlap, we used a large language model (LLM)-as-a-judge Evaluation Score implemented through the DeepSeek API. For each case, the generated report was compared with the ground-truth reference report under an honesty-prioritized rubric in which correct information was favored over omitted or unknown information, and omitted or unknown information was favored over incorrect or hallucinated content. The Evaluation Score aggregated four clinically critical dimensions with predefined weights: \textit{Clinical Factuality \& Honesty} (40\%), \textit{Clinical Completeness} (30\%), \textit{Logical Consistency} (20\%), and \textit{Professionalism \& Fluency} (10\%). The resulting weighted composite score ranged from 0 to 1, with higher values indicating greater clinical accuracy, completeness, internal consistency, and professional readability. Additional implementation details for the Evaluation Score, including report extraction, scoring workflow, and the full LLM-as-a-judge rubric, are provided in Supplementary Methods~\ref{subsec:eval_score_details} and Figure~\ref{fig:eval_prompt}.

To validate the LLM-based Evaluation Score against pathologist assessment, we conducted an additional blinded, repeated pathologist review of a score-stratified subset of reports. The mean score across three independent review sessions was used as the pathologist reference score. Detailed procedures for report sampling, blinding, and scoring are provided in Supplementary Methods~\ref{subsec:human_validation_details}.

To quantify textual agreement with reference pathology reports, we also computed standard automatic text-similarity metrics, including BLEU, ROUGE, and METEOR (formal definitions provided in Supplementary Methods~\ref{app:metrics}). BLEU measures $n$-gram precision with a brevity penalty, capturing surface-level overlap with reference reports. ROUGE evaluates recall-oriented $n$-gram and subsequence matching, emphasizing content coverage. METEOR extends beyond exact matching by incorporating synonymy and stemming, offering a more semantically aware measure of lexical similarity. All metrics were computed using their standard definitions and default parameter settings. PRISM, CorePath, and CorePath-CRG were evaluated in SWH, WCH-2, WTH, and SJH. 

\subsection{Evaluation and Statistical Analysis}
Diagnostic classification performance was evaluated using accuracy, the weighted F1 score, and the weighted area under the receiver operating characteristic curve (AUC). For multiclass tasks, AUC was computed using a one-vs-rest approach and averaged across classes with class-frequency weighting. Classification metrics were calculated on the complete test sets, with 95\% confidence intervals estimated by nonparametric bootstrapping using 1000 resamples. Report-generation quality was summarized using the hallucination, text-similarity, and Evaluation Score metrics described above. LLM-based Evaluation Scores were compared across the three matched model groups using a one-way repeated-measures analysis of variance with the Geisser-Greenhouse correction. When the overall difference was significant, Tukey's multiple-comparisons test was used for pairwise comparisons. Statistical analyses were performed using GraphPad Prism 9.5 (GraphPad Software, San Diego, CA). A two-sided $P<0.05$ was considered statistically significant.

For pathologist validation of the LLM-based Evaluation Score, intra-rater reliability across the three independent review sessions was quantified using two-way mixed-effects, absolute-agreement intraclass correlation coefficients (ICC). ICC(A,1) represented the reliability of a single-session score, whereas ICC(A,3) represented the reliability of the mean of three session scores. Agreement between the LLM-based Evaluation Score and pathologist reference score was evaluated overall and separately for PRISM, CorePath, and CorePath-CRG. Spearman's $\rho$ assessed rank-order association, whereas a two-way mixed-effects, absolute-agreement, single-measure ICC, denoted ICC(A,1), quantified absolute agreement between the LLM-based Evaluation Score and the corresponding pathologist reference score. ICC estimates were reported with 95\% confidence intervals. Mean absolute difference quantified the average magnitude of scoring error, and Bland-Altman analysis was used to estimate the mean signed bias (LLM-based Evaluation Score minus pathologist reference score) and the 95\% limits of agreement.
\FloatBarrier
\section{Results}

\subsection{Overview of CorePath and CorePath-CRG}
Figure~\ref{fig:pipeline} summarizes the construction of CorePath and its risk-controlled CorePath-CRG extension. Paired breast CNB WSIs and pathology reports from two institutions are used to fine-tune PRISM and derive CorePath. The performance of CorePath was then assessed in independent multicenter cohorts and public datasets, with prespecified tasks spanning cancer detection, invasion assessment, histological subtyping, and fine-grained lesion classification. This design was intended to evaluate whether breast-specific adaptation improved diagnostic performance and generalizability across institutions, specimen sources, and hierarchical diagnostic tasks.

We further developed CorePath-CRG as a three-tier selective report-generation framework. Conformal calibration retained high-confidence diagnostic predictions, while a fused score combining candidate-report Self-Confidence and diagnostic confidence was compared with an LTT-calibrated release threshold. Based on the release decision and the availability of a confident conformal prediction, CorePath-CRG automatically released a synthesized diagnostic statement, returned a subtype-only fallback, or output ``Unknown'' and deferred the case for pathologist review.

\begin{figure}[htbp]
    \centering
    \includegraphics[width=\linewidth]{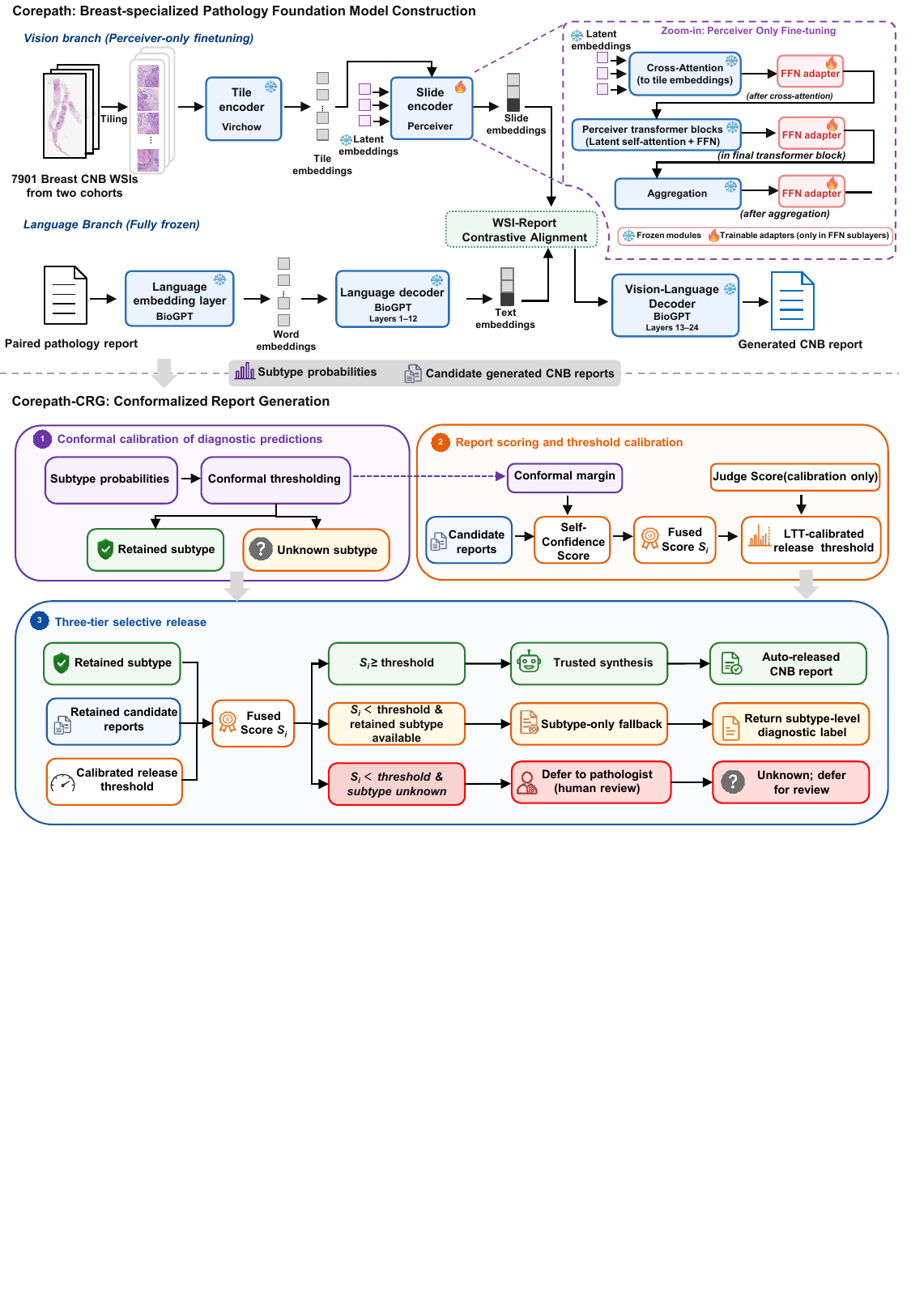}
    \caption{\textbf{Overview of CorePath and CorePath-CRG for breast CNB diagnosis and conformalized report generation.} CorePath was developed by fine-tuning the PRISM pathology foundation model using paired CNB WSIs and pathology reports from two institutions. The resulting CorePath model was used for zero-shot diagnostic classification across predefined breast CNB tasks. For report generation, CorePath produced diagnostic probabilities and multiple candidate pathology reports for each case. CorePath-CRG first applied conformal thresholding to retain high-confidence subtype predictions as auxiliary diagnostic evidence or mask uncertain predictions as ``Unknown''. Candidate reports were then assessed using a fused release score that combined the report-level Self-Confidence Score with the conformal diagnostic confidence margin, with the Judge Score used only for calibration-risk definition. An LTT-calibrated threshold enabled three-tier selective release: trusted synthesis and automatic release, subtype-only fallback, or ``Unknown'' output with deferral for pathologist review. CNB, core needle biopsy; WSI, whole-slide image; FFN, feed-forward network; LTT, learn-then-test.}
    \label{fig:pipeline}
\end{figure}
\FloatBarrier

\subsection{CorePath Performance Across Breast Pathology Tasks}
CorePath demonstrated consistently strong zero-shot performance in breast tumor subtype prediction across independent external cohorts. Figure~\ref{fig:zeroshot} and Tables~\ref{tab:zeroshot_binary}-\ref{tab:zeroshot_open} summarize the results. Across the private multicenter CNB cohorts, CorePath consistently outperformed the base PRISM model throughout the hierarchical diagnostic workflow defined in Table~\ref{tab:dataset_summary}, from cancer detection to invasion assessment and histological subtyping. For cancer detection, CorePath outperformed PRISM on all three metrics in every cohort and achieved the highest weighted AUC in all six cohorts, with values ranging from 0.9669 to 0.9989 (Figure~\ref{fig:zeroshot}a, Table~\ref{tab:zeroshot_binary}). It also achieved the highest accuracy and weighted F1 score in SPH-2, SWH, WCH-2, and WTH. In SJH and SZH, TITAN achieved higher accuracy and weighted F1 scores, whereas CorePath achieved the highest weighted AUC.

For invasion assessment, CorePath achieved the highest accuracy, weighted F1 score, and weighted AUC in all six private cohorts, with weighted AUCs ranging from 0.9643 to 0.9881 (Figure~\ref{fig:zeroshot}b, Table~\ref{tab:zeroshot_3class}). Its advantage remained pronounced in histological subtyping, a more demanding task requiring discrimination among noncancer, carcinoma in situ, invasive breast carcinoma NOS, invasive lobular carcinoma, and other invasive breast carcinoma (Figure~\ref{fig:zeroshot}c, Table~\ref{tab:zeroshot_5class}). CorePath achieved the highest values for all three metrics in evaluated centers, with weighted AUCs ranging from 0.9526 to 0.9735. These findings indicate that breast-specific multimodal fine-tuning improves generalization for clinically relevant subtype discrimination under cross-center distribution shifts.

The public benchmarks provided complementary evidence of external generalization (Figure~\ref{fig:zeroshot}d, Table~\ref{tab:zeroshot_open}). In the BCNB cohort, an external CNB dataset for invasive carcinoma subtype classification, CorePath achieved the best overall performance, with the highest accuracy of 0.8989, weighted F1 score of 0.8919, and weighted AUC of 0.7780, supporting its generalizability beyond the participating institutions within the same specimen type. In the BRACS cohort, which contains H\&E-stained tissue samples obtained by biopsy or surgical resection and thus represents a broader specimen context, CorePath achieved the highest values for all three metrics in both lesion stratification and fine-grained classification, with weighted AUCs of 0.8178 and 0.8252, respectively. Because the public benchmarks differ from the private CNB cohorts in specimen composition and label definitions, these results are interpreted as evidence of representational transferability and external robustness. Taken together, these observations show that fine-tuning PRISM on paired breast CNB WSI-report data yields consistent gains across cancer detection, invasion assessment, histological subtyping, invasive carcinoma subtype categorization, lesion stratification, and fine-grained classification, supporting the use of CorePath as a more generalizable diagnostic backbone for downstream breast CNB analysis.

\begin{figure}[htbp]
    \centering
    \includegraphics[width=\linewidth]{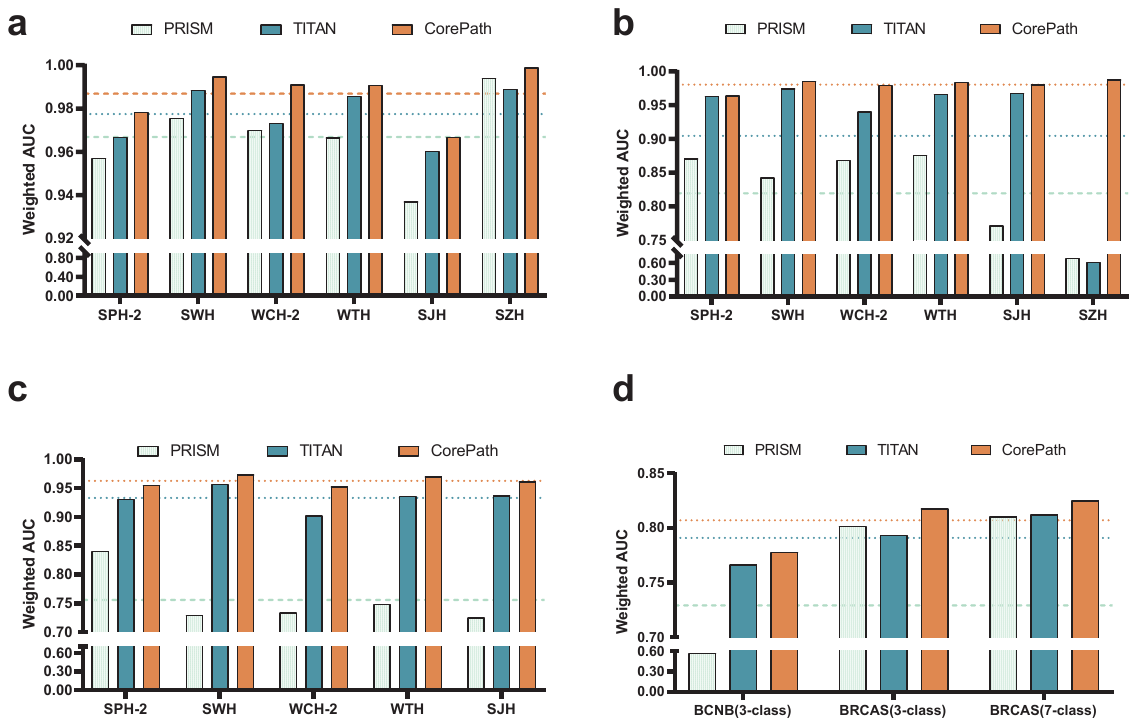}
    \caption{ \textbf{Zero-shot diagnostic classification performance across independent private and public breast pathology cohorts.} (a) Cancer detection on independent private CNB cohorts.
(b) Invasion assessment on independent private CNB cohorts.
(c) Histological subtyping on independent private CNB cohorts.
(d) Public benchmark evaluation on BCNB and BRACS, where the BCNB 3-class task corresponds to invasive carcinoma subtype categorization, and the BRACS 3-class and 7-class tasks correspond to lesion stratification and fine-grained classification, respectively.
Bars show weighted AUCs for PRISM, TITAN, and CorePath; dashed horizontal lines indicate the average weighted AUC of each model within the corresponding panel.
CNB, core needle biopsy; AUC, area under the receiver operating characteristic curve.
}
    
    \label{fig:zeroshot}
\end{figure}
\FloatBarrier

\subsection{Safety and Clinical Quality of Generated Reports}
Beyond diagnostic classification, we evaluated the safety and clinical quality of CorePath and CorePath-CRG for pathology report generation. As shown in Figure~\ref{fig:hallucination_llm}a, CorePath substantially reduced non-breast hallucinations compared with PRISM, decreasing the overall hallucination rate from 30.1\% to 2.8\%. Across individual centers, hallucination rates ranged from 27.6\% to 40.6\% for PRISM and from 2.3\% to 3.9\% for CorePath. The high hallucination rates of PRISM underscore the inherent limitations of general-purpose models in specialized workflows, where they frequently fabricate extraneous disease entities. Conversely, the consistently low and cross-center stable performance of CorePath supports the effectiveness of our domain-specialized fine-tuning, which successfully anchors the generative distribution to the breast pathology context while maintaining robust generalization. Furthermore, CorePath-CRG eliminates the residual errors entirely via constrained decoding, achieving a strict 0\% hallucination rate among released outputs. This design provides flexible deployment options, combining the greater generative flexibility of the base CorePath model with the more conservative, risk-controlled outputs of CorePath-CRG. 

The mean of three repeated pathologist-review sessions showed excellent intra-rater reliability (ICC(A,3) = 0.993; 95\% CI, 0.99-1.00) (Table~\ref{tab:human_reliability}). LLM-based Evaluation Scores showed strong rank correlation and good absolute agreement with the pathologist reference score (Spearman's $\rho=0.911$; ICC(A,1) = 0.843; 95\% CI, 0.74-0.90), supporting their use as a complementary report-quality measure (Table~\ref{tab:llm_human_agreement}, Figure~\ref{fig:bland_altman_agreement}). The LLM-based Evaluation Scores in Figure~\ref{fig:hallucination_llm}b-e and Table~\ref{tab:llm_eval} further showed progressively higher LLM-based Evaluation Scores from PRISM to CorePath and then to CorePath-CRG. Compared with PRISM, CorePath achieved higher mean and median Evaluation Scores across four independent centers. The improvement was statistically significant in SWH, WCH-2, and WTH, while the difference was not significant in SJH. CorePath-CRG further increased Evaluation Scores relative to CorePath across all four centers, with statistically significant gains in each cohort. These findings suggest that domain-specific fine-tuning improves the baseline quality of generated pathology reports, and that CorePath-CRG can enhance reference alignment and report reliability by preferentially retaining concise, high-confidence diagnostic outputs.

In addition to hallucination evaluation and the LLM-based evaluation score, we also computed standard automatic report-generation metrics. As shown in Table~\ref{tab:hard_metrics}, CorePath showed superior or comparable performance to PRISM across the four external centers. The clearest improvements were observed in WTH and WCH-2, where METEOR increased from 0.2124 to 0.2597 and from 0.1944 to 0.2313, respectively. These results suggest that domain-specific fine-tuning improved the agreement between generated reports and original reference reports, especially in relatively standardized diagnostic settings. The incorporation of CorePath-CRG further improved reference-based agreement in several centers, achieving the highest values for all eight metrics in SJH and for seven of eight metrics in both WCH-2 and WTH. For example, in WCH-2, ROUGE-1 increased from 0.2013 with CorePath to 0.3031 with CorePath-CRG, while BLEU-1 increased from 0.1702 to 0.2201. In SWH, CorePath-CRG achieved higher ROUGE-1, ROUGE-2, and ROUGE-L scores but lower BLEU and METEOR scores than CorePath, suggesting a potential trade-off between conservative risk control and lexical richness. This pattern may arise because CorePath-CRG preferentially retains concise high-confidence diagnostic outputs under uncertainty, thereby preserving essential diagnostic information while reducing longer free-text descriptions.

To further distinguish the effect of risk-controlled rejection from the intrinsic quality of retained reports, we repeated the automatic text-metric evaluation after excluding samples assigned to ``Unknown'' by CorePath-CRG (Table~\ref{tab:hard_metrics_exclude}). In this retained high-confidence subset, CorePath-CRG showed improved reference-based agreement compared with both PRISM and CorePath in most settings, achieving the best performance across all eight metrics in WCH-2, WTH, and SJH, and across seven of eight metrics in SWH. These findings suggest that the lower scores observed for some metrics in the full-set analysis were partly attributable to conservative ``Unknown'' outputs rather than poor generation quality among accepted reports. This interpretation is supported by the post-conformal diagnostic label distributions, which show the proportion of cases retaining confident cancer-status and subtype-level predictions after conformal filtering and provide context for the subtype-only fallback and full-deferral pathways (Table~\ref{tab:pathology_stats}). The final-output and rejection statistics of CorePath-CRG further show that full rejection rates ranged from 19.15\% to 38.58\% and narrative rejection rates ranged from 78.43\% to 83.59\% across centers, confirming that many low-confidence cases were intentionally routed away from free-text report release (Table~\ref{tab:risk_stats}). Together, these findings indicate that domain-specific fine-tuning combined with risk-controlled selective generation can improve report reliability and reference alignment across heterogeneous medical centers.

\begin{figure}[htbp]
    \centering
    \includegraphics[width=0.95\linewidth]{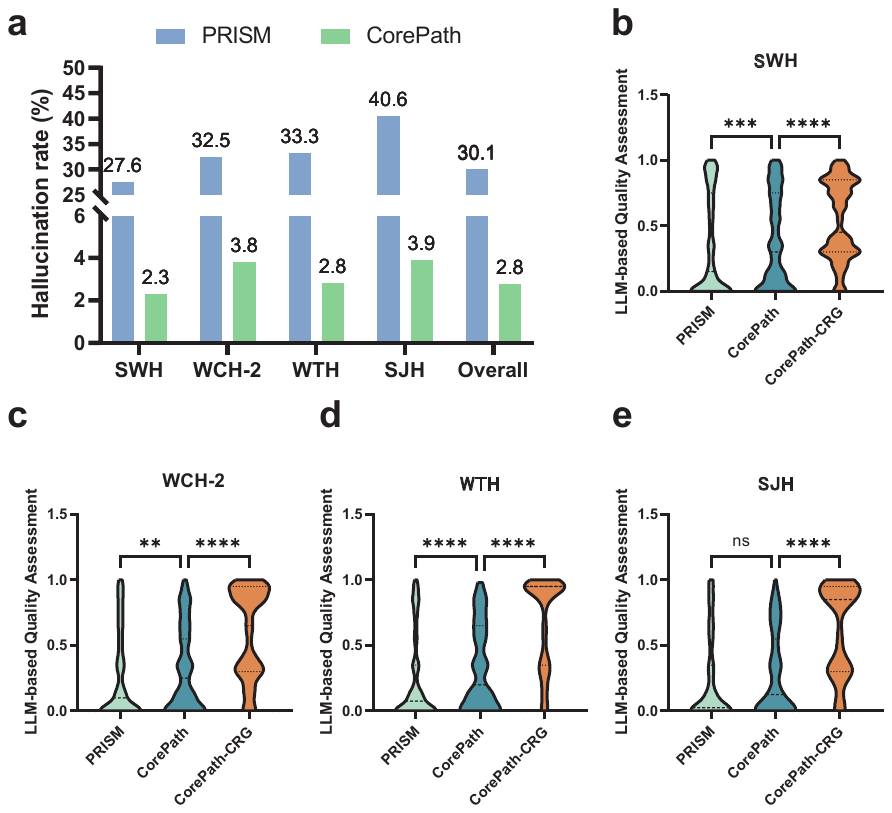}
    \caption{ \textbf{Hallucination reduction and LLM-based Evaluation Score for generated pathology reports.} (a) Hallucination rates of unconstrained PRISM and CorePath outputs. CorePath-CRG is omitted because its 0\% rate reflects risk-controlled selective outputs and is not directly comparable.  
(b-e) Violin plots of LLM-based Evaluation Score distributions for PRISM, CorePath, and CorePath-CRG in SWH, WCH-2, WTH, and SJH. Dashed horizontal lines denote the first quartile, median, and third quartile within each distribution.
ns, not significant; **$P<0.01$; ***$P<0.001$; ****$P<0.0001$.
}
    \label{fig:hallucination_llm}
\end{figure}

\FloatBarrier

\begin{table}[htbp]
\centering
\caption{Quantitative evaluation of PRISM, CorePath, and CorePath-CRG using automatic text-similarity metrics across four datasets. B-1 to B-4 denote BLEU-1 to BLEU-4; R-1, R-2, and R-L denote ROUGE-1, ROUGE-2, and ROUGE-L. Best results are in \textbf{bold}.}
\label{tab:hard_metrics}
\resizebox{\textwidth}{!}{
\renewcommand{\arraystretch}{1.25}
\setlength{\tabcolsep}{4.5pt}
\begin{tabular}{ll cccccccc}
\toprule
\textbf{Dataset} & \textbf{Model} & \textbf{B-1} & \textbf{B-2} & \textbf{B-3} & \textbf{B-4} & \textbf{R-1} & \textbf{R-2} & \textbf{R-L} & \textbf{METEOR} \\
\midrule
\multirow{3}{*}{SWH} 
  & PRISM & 0.1654 & 0.0545 & 0.0333 & 0.0253 & 0.1437 & 0.0225 & 0.1267 & 0.1705 \\
  & CorePath & \textbf{0.1810} & \textbf{0.0567} & \textbf{0.0355} & \textbf{0.0270} & 0.1672 & 0.0199 & 0.1330 & \textbf{0.1728} \\
  & CorePath-CRG & 0.0963 & 0.0448 & 0.0339 & 0.0254 & \textbf{0.1871} & \textbf{0.0762} & \textbf{0.1846} & 0.1323 \\
\midrule
\multirow{3}{*}{WCH-2} 
  & PRISM & 0.1510 & 0.0628 & 0.0405 & 0.0294 & 0.1659 & 0.0406 & 0.1507 & 0.1944 \\
  & CorePath & 0.1702 & 0.0717 & 0.0459 & 0.0321 & 0.2013 & 0.0507 & 0.1801 & \textbf{0.2313} \\
  & CorePath-CRG & \textbf{0.2201} & \textbf{0.0891} & \textbf{0.0709} & \textbf{0.0555} & \textbf{0.3031} & \textbf{0.0529} & \textbf{0.2992} & 0.1970 \\
\midrule
\multirow{3}{*}{WTH} 
  & PRISM & 0.1648 & 0.0576 & 0.0351 & 0.0265 & 0.1449 & 0.0234 & 0.1357 & 0.2124 \\
  & CorePath & 0.1931 & 0.0665 & 0.0427 & 0.0306 & 0.2064 & 0.0341 & 0.1930 & \textbf{0.2597} \\
  & CorePath-CRG & \textbf{0.2870} & \textbf{0.0984} & \textbf{0.0736} & \textbf{0.0631} & \textbf{0.4029} & \textbf{0.0364} & \textbf{0.4019} & 0.2312 \\
\midrule
\multirow{3}{*}{SJH} 
  & PRISM & 0.1483 & 0.0560 & 0.0381 & 0.0293 & 0.1241 & 0.0149 & 0.1139 & 0.1769 \\
  & CorePath & 0.1561 & 0.0489 & 0.0309 & 0.0233 & 0.1506 & 0.0169 & 0.1366 & 0.1931 \\
  & CorePath-CRG & \textbf{0.2454} & \textbf{0.0911} & \textbf{0.0704} & \textbf{0.0576} & \textbf{0.3134} & \textbf{0.0456} & \textbf{0.3134} & \textbf{0.2003} \\
\bottomrule
\end{tabular}
}
\end{table}
\FloatBarrier

\subsection{Visualization and Report Generation Examples}

Figure~\ref{fig:report_examples} presents three representative cases comparing the diagnostic outputs of PRISM, CorePath, and CorePath-CRG against the reference diagnoses. In Case 1, the reference diagnosis was invasive breast carcinoma, not otherwise specified. CorePath generated a breast-specific malignant diagnosis and partially overlapped with the reference, but its description included additional features that were not fully concordant. In contrast, PRISM produced an incorrect diagnosis with mismatched histological terminology. CorePath-CRG generated a concise diagnosis of invasive breast carcinoma NOS and achieved high agreement with the reference. This example suggests that breast-specific fine-tuning improves diagnostic relevance, while the risk-controlled framework can further promote clinically appropriate report release when the model output is sufficiently reliable. In Case 2, the reference diagnosis was invasive carcinoma involving the breast, with consideration of a primary breast origin. Both CorePath and PRISM produced incorrect diagnoses, as reflected by the discordant terms highlighted in red. Rather than releasing a potentially misleading diagnosis, CorePath-CRG returned ``Unknown'', indicating abstention under uncertainty. This case illustrates the safety-oriented behavior of the proposed framework, in which uncertain or low-confidence cases are deferred instead of being automatically reported, prompting manual review by pathologists. In Case 3, the reference diagnosis was fibroadenoma. All three models generated reports consistent with the benign diagnosis. Importantly, CorePath-CRG retained and released the correct benign diagnosis rather than unnecessarily rejecting the case, suggesting that the risk-control mechanism does not simply increase rejection but aims to preserve reliable outputs. Additional examples are illustrated in Figure~\ref{fig:report_examples_supp}.

Overall, these qualitative examples demonstrate the complementary benefits of breast-specific model adaptation and risk-controlled selective reporting. CorePath improves the domain specificity of generated reports compared with the general pathology foundation model, whereas CorePath-CRG further enhances clinical reliability by selectively releasing high-confidence reports and abstaining from uncertain or potentially wrong outputs.

\begin{figure}[htbp]
% \label{fig:example}
    \centering
    \includegraphics[width=.9\linewidth]{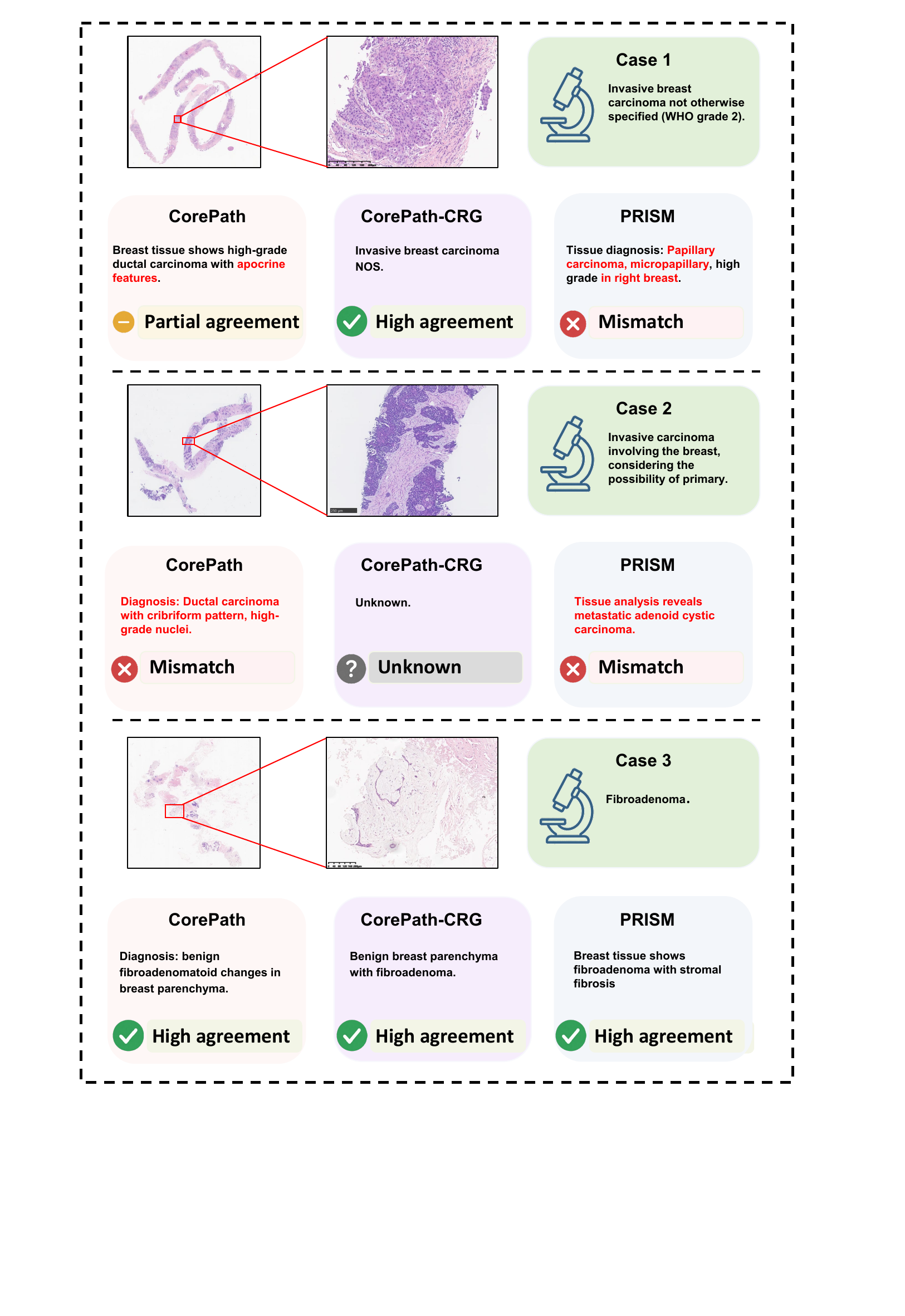}
    \caption{\textbf{Representative report-generation examples across different models.}
For each case, the left panels show the whole-slide image and a magnified region of interest, and the green panel summarizes the reference diagnosis. Model-generated outputs are shown below each case. Red text indicates an incorrect or unsupported diagnosis. Scale bars: from top to bottom, 200$\mu$m, 250$\mu$m, and 200$\mu$m.}
    \label{fig:report_examples}
\end{figure}
\FloatBarrier

\section{Discussion}
In this study, we developed CorePath, a breast-specialized multimodal pathology foundation model for CNB interpretation, and CorePath-CRG, a risk-controlled framework for selective AI-assisted report release. This work addresses the need to adapt broadly pretrained pathology foundation models to clinically specialized diagnostic settings while improving the reliability of generated diagnostic text in high-risk medical workflows. CorePath and CorePath-CRG serve complementary functions, in which CorePath enhances breast-specific diagnostic capability, and CorePath-CRG prevents uncertain or potentially unreliable content from being presented as definitive. This explicit separation between diagnostic inference and output authorization provides the conceptual basis for interpreting our findings and guiding future clinical deployment.

CorePath demonstrated strong diagnostic generalization, suggesting that domain-specific adaptation can further align large pathology foundation models with specialized clinical settings while preserving the flexibility of language-guided zero-shot inference~\cite{ramanathan2025modaltune}. Breast CNB represents a specialized diagnostic setting in which limited tissue sampling, lesion heterogeneity, and subtle morphologic overlap can make subtype discrimination challenging~\cite{schnitt2019problematic}. Although general pathology foundation models have demonstrated strong transferability across histopathology tasks, their representations may not be optimally aligned with the diagnostic hierarchy of breast CNB. By fine-tuning PRISM on paired CNB WSI-report data, CorePath strengthened the association between slide-level morphology and breast-specific diagnostic language, with the most pronounced gains observed in histological subtyping. Importantly, this specialization did not require training separate supervised classifiers for different label systems. Instead, CorePath can be queried using clinically defined diagnostic categories, making it well suited to hierarchical CNB workflows that progress from cancer detection to invasion assessment and histological subtyping. The same principle may also support future prompt-defined label spaces for borderline or clinically challenging entities, such as atypical ductal hyperplasia versus ductal carcinoma in situ, or carcinoma in situ with suspected microinvasion.

Prior work on medical report generation has shown that readable narratives may still fail to capture the clinical accuracy required for clinician-facing workflows, and that generated reports can contain unsupported or hallucinated clinical statements~\cite{ramesh2022improving,asgari2025framework}. Such unsupported diagnostic details may mislead clinicians even when the generated report appears stylistically plausible. In our study, CorePath substantially reduced non-breast hallucinations relative to the base PRISM model, suggesting that domain-specific fine-tuning more closely aligns model generation with breast pathology. However, the residual hallucinations observed with CorePath also show that fine-tuning alone is insufficient for safe report generation. CorePath-CRG adopts a conservative operating mode, releasing concise reports only when the calibrated evidence is sufficient, falling back to subtype-level output when narrative content is unreliable, and abstaining when both report generation and subtype prediction are uncertain. This tiered behavior is consistent with uncertainty-aware medical AI, in which selective prediction and abstention are used to surface uncertainty and defer low-confidence cases rather than forcing an answer~\cite{kompa2021second}. At the same time, the risk-control guarantees provided by CorePath-CRG require careful translation into practice. Conformal prediction provides distribution-free, finite-sample uncertainty guarantees under standard calibration assumptions, whereas LTT calibrates predictive algorithms to achieve explicit finite-sample risk-control guarantees without model refitting~\cite{angelopoulos2023conformal,angelopoulos2025learn}. In pathology workflows, however, calibration must remain relevant to the deployment setting, because external scanners, laboratories, and assessment practices can introduce systematic differences that affect reliability~\cite{olsson2022estimating}. Real-world deployment will therefore require local calibration, continuous monitoring, and periodic recalibration as case mix, scanners, staining protocols, and reporting practices change. The formal guarantee applies to the auto-released report tier, whereas fallback and deferral pathways serve as additional safeguards. Their effects on diagnostic accuracy, turnaround time, workload, and user trust should be evaluated prospectively.

Several limitations should be acknowledged. First, this was a retrospective study, and prospective validation in real clinical workflows is required before clinical deployment. Second, although the evaluation included multiple independent private centers and two public benchmarks, the fine-tuning data were drawn from only two centers. Additional external cohorts are needed to assess robustness across broader variations in tissue processing, staining, scanning platforms, and reporting conventions. Third, the statistical guarantees of the conformal and LTT procedures rely on calibration and test examples being exchangeable, and their validity may be affected by substantial distribution shifts, changes in clinical workflow, or modifications to the underlying model and prompting strategy after calibration. Fourth, our hallucination analysis focused primarily on non-breast hallucinations, which capture severe domain-level errors but do not encompass all possible factual inaccuracies, such as unsupported tumor grade, biomarker status, laterality, or subtle morphologic overinterpretation. However, this information could be reflected indirectly in the LLM-as-a-judge clinical quality evaluation. Fifth, while the LLM-based Evaluation Scores demonstrate strong rank correlation and absolute agreement with pathologist reference scores, Bland-Altman analysis reveals a conservative bias, overestimating clinical risk.  Future evaluations could integrate a domain-specific knowledge base to better resolve semantic equivalence and entailment in diagnostic narratives.

Taken together, CorePath demonstrates that paired WSI-report supervision can specialize a multimodal pathology foundation model for breast CNB diagnosis while preserving zero-shot flexibility across clinically relevant diagnostic tasks. CorePath-CRG further shows that breast-specific adaptation can be integrated with explicit risk control to enable selective report release, subtype-level fallback, and deferral of uncertain cases. These findings suggest that domain-specialized and statistically risk-controlled pathology AI may offer a practical path toward more reliable clinical decision support. Before clinical implementation, prospective studies involving pathologist review of AI-assisted diagnoses and reports are required to assess diagnostic accuracy, reporting efficiency, calibration stability, patient safety, pathologist workload, and user trust in real-world clinical workflows.

\bibliographystyle{unsrt}
\bibliography{refs}
\newpage

\renewcommand{\thesection}{\Alph{section}} % corrected redefinition of '\thesection'
\makeatletter
\renewcommand\@seccntformat[1]{\csname the#1\endcsname.\hspace{0.5em}}
\makeatother

\vspace{2em}
\begin{center}
  \LARGE \textbf{Supplementary}
\end{center}
\vspace{1.5em}

\setcounter{section}{0}
\section{Supplementary Methods}
\setcounter{table}{0}
\renewcommand{\thetable}{S\arabic{table}}
\renewcommand{\theHtable}{S\arabic{table}}

\subsection{Implementation Details}
\label{app:implementation_details}
\paragraph{Hardware and Training Environment}
All fine-tuning experiments were conducted in a distributed manner using Distributed Data Parallel (DDP) across 8 NVIDIA RTX 3090 GPUs. 

\paragraph{Optimization and Hyperparameters}
The model was optimized for a total of 6 epochs. We utilized a per-GPU batch size of 5, coupled with 16 gradient accumulation steps, resulting in an effective global batch size of 640 ($5 \times 16 \times 8$). The learning rate was initialized at $2 \times 10^{-4}$ with a weight decay of $5 \times 10^{-6}$. The overall training objective comprised multiple loss terms: the weights for both the contrastive loss and the distillation loss were set to 1.0, while the generative loss component was explicitly disabled (weight = 0.0) for this specific fine-tuning phase.

\paragraph{Input Processing Configuration}
For whole-Slide Image (WSI) processing, the maximum number of tiles was capped at 2,560. To introduce stochasticity and mitigate overfitting during training, random tile sampling was enabled. For the text modality, the maximum sequence length was restricted to 768 tokens.

\paragraph{Parameter-Efficient Fine-Tuning via Adapters}
To efficiently adapt the pre-trained PRISM architecture while keeping the computational footprint manageable, we injected lightweight adapter modules exclusively into the Slide Encoder (Perceiver) component. The adapter modules were configured with a bottleneck dimension of 8, a ReLU activation function, a dropout rate of 0.0, and a residual scaling factor of 1.0. Rather than updating the full parameter space, we strategically targeted the feed-forward networks (FFNs) of specific self-attention and cross-attention layers within the Perceiver blocks. The exact targeted modules include the FFNs in the 6th transformer layer of the first block (\texttt{layers.0.tf.5.ff}), as well as the FFNs of the cross-attention layers in the first and second blocks (\texttt{layers.0.xattn.ff} and \texttt{layers.1.xattn.ff}).

\subsection{Conformal Calibration of Diagnostic Predictions}
\label{conformal}
Let $X$ denote the slide-level representation extracted by CorePath, and let $Y$ denote the corresponding subtype label. Given a pretrained classifier that outputs predictive probabilities, conformal prediction aims to construct a prediction rule whose error rate is controlled at a predefined level $\alpha$, such that the probability of making an incorrect but confident prediction is upper bounded by $\alpha$~\cite{olsson2022estimating}.

For cancer detection task, CorePath produces a likelihood probability $p = \mathbb{P}(Y=\text{malignant}\mid X)$ via prompt-based zero-shot inference. Instead of enforcing a single decision threshold, we introduce an ambiguity-aware conformal scheme that explicitly allows abstention~\cite{angelopoulos2023conformal}. Using a held-out calibration set, we collect predictive probabilities from correctly classified samples only, separating malignant cases with $p>0.5$ and benign cases with $p<0.5$. Two conformal thresholds are then estimated via empirical quantiles: a lower threshold $t_{\mathrm{lower}}$ derived from the upper tail of benign predictions, and an upper threshold $t_{\mathrm{upper}}$ derived from the lower tail of malignant predictions. For a new sample, predictions with $p>t_{\mathrm{upper}}$ are deemed reliably malignant, predictions with $p<t_{\mathrm{lower}}$ are deemed reliably benign, and predictions falling in between are labeled as unstable. This construction ensures that, with probability at least $1-\alpha$, a prediction assigned to either class is correct, while uncertain cases are explicitly deferred~\cite{angelopoulos2022conformal}.

For histological subtyping, we adopt a margin-based conformal strategy. Given class probabilities $\{p_k\}_{k=1}^K$ produced by CorePath through multi-class prompt inference, we define the nonconformity score as the classification margin between the top two predicted classes, namely $\Delta = p_{(1)} - p_{(2)}$~\cite{angelopoulos2021gentle}, where $p_{(1)}$ and $p_{(2)}$ denote the largest and second-largest probabilities, respectively~\cite{angelopoulos2020uncertainty}. Calibration is performed by collecting margins from correctly classified calibration samples and estimating a threshold $t_{\mathrm{margin}}$ as the $\alpha$-quantile of this distribution. At inference time, a prediction is considered reliable only if its margin exceeds $t_{\mathrm{margin}}$; otherwise, the model abstains. This margin-based conformal rule provides a principled mechanism for identifying ambiguous subtype predictions while maintaining finite-sample coverage guarantees.

\subsection{Self-Confidence Score for Candidate-Report Reliability Assessment}
\label{subsec:prism_score_details}

\paragraph{Overview and Workflow.}
The Self-Confidence Score is a continuous metric (ranging from 0.00 to 1.00) designed to evaluate the holistic clinical and linguistic quality of AI-generated pathology reports. For each case, the evaluation pipeline takes as input a set of 5 candidate reports (generated via sampling or ensemble methods) alongside the reference classifier's predictions. To ensure the evaluation is grounded in reliable references, we first apply a Conformal Prediction mechanism to determine the reliability of the reference classifier labels. Subsequently, a Large Language Model (LLM) auditor is prompted to assess the reports and output a structured analysis and a final continuous score.

\paragraph{Uncertainty-Aware Auxiliary Context via Conformal Prediction.}
While the reference report serves as the primary standard for evaluation, the upstream classifier's predictions are provided to the LLM auditor as auxiliary context to aid the auditing process. To prevent the auditor from unfairly penalizing AI-generated reports that contradict \textit{uncertain} classifier predictions, we utilize Conformal Prediction to dynamically filter out low-confidence auxiliary labels.
\begin{itemize}
    \item \textbf{Binary Malignancy Status:} Given the classifier's confidence score $s_{bi}$, the auxiliary prediction is deemed reliable only if it falls outside the ambiguity region (i.e., $s_\mathrm{bi} > \tau_\mathrm{upper}$ or $s_\mathrm{bi} < \tau_\mathrm{lower}$). Otherwise, the auxiliary status is masked as \texttt{Unknown}.
    \item \textbf{Multi-Class Subtype Status:} Given the classifier's probability distribution over subtypes, we compute the prediction margin $\Delta = p_\mathrm{max} - p_\mathrm{second\_max}$. If $\Delta > \tau_\mathrm{margin}$, the auxiliary subtype is considered reliable; otherwise, it is masked as \texttt{Unknown}.
\end{itemize}
By marking uncertain classifier predictions as \texttt{Unknown}, the scoring prompt explicitly instructs the LLM auditor to bypass the classifier-alignment constraint for these specific cases, ensuring that the evaluation relies solely on the reliable reference and clinical logic when upstream predictions are noisy.

\paragraph{LLM Auditor and Prompt Design.}
We employ the DeepSeek API (\texttt{deepseek-chat}) with a temperature of 0.0 to guarantee deterministic and reproducible scoring. The LLM is instructed to act as a breast pathology report quality auditor, evaluating the candidate reports across five critical dimensions: (1) Breast relevance, (2) Diagnostic completeness, (3) Inter-report consistency, (4) Information alignment (conditionally applied based on Conformal Prediction), and (5) Linguistic coherence. The exact prompts are detailed in Figure~\ref{fig:prism_prompt}.

\setcounter{figure}{0}
\renewcommand{\thefigure}{S\arabic{figure}}
\renewcommand{\theHfigure}{S\arabic{figure}}
\begin{figure}[htbp]
\centering
\begin{tcolorbox}[
    colback=gray!5,
    colframe=gray!60,
    boxrule=0.5pt,
    arc=0pt,
    left=6pt, right=6pt, top=6pt, bottom=6pt,
    title=\textbf{Prompt Template for Self-Confidence Score (Self-Confidence Scoring Agent)},
    fonttitle=\normalsize\bfseries
]
\begin{lstlisting}[
    breaklines=true,
    basicstyle=\small\ttfamily,
    columns=fullflexible,
    frame=none,
    backgroundcolor=\color{gray!5}
]
### SYSTEM PROMPT ###
You are a breast pathology report quality auditor.
Given 5 AI-generated pathology reports and a reference classifier output, score the overall report quality on a continuous scale from 0.00 to 1.00.

Focus on these aspects when forming your score:
1. Breast relevance -- do the reports clearly discuss breast tissue or breast disease?
2. Diagnostic completeness -- do they contain a meaningful clinical conclusion?
3. Consistency -- do the reports agree with each other on malignancy and subtype?
4. Information alignment -- do the generated reports align with the reference classifier?
5. Linguistic coherence -- are the reports written as coherent medical prose?

Scoring guidance (use the full continuous range, avoid rounding to 0.1 steps):
  - Reports that are relevant, complete, consistent, and reference classifier output-aligned -> near 1.00
  - Partial relevance or mild inconsistency -> 0.50-0.79
  - Mostly irrelevant, incoherent, or contradicting reference classifier output -> below 0.40
  - Completely off-topic or nonsensical -> near 0.00

Output format (use these exact tags, nothing else outside them):
<analysis>
Breast relevance: [brief note]
Diagnostic completeness: [brief note]
Consistency: [brief note]
Information alignment: [brief note or "reference classifier unknown -- skipped"]
Coherence: [brief note]
</analysis>
Self-Confidence_Score: X.XX

### USER PROMPT ###
Reference Classifier: Subtype={mu_status} | Malignancy={bi_status} 
               (low-confidence -- treat as unknown) [Conditionally appended]

Reports:
R1: {report_1}
R2: {report_2}
R3: {report_3}
R4: {report_4}
R5: {report_5}
\end{lstlisting}
\end{tcolorbox}
\caption{System and User prompts used for calculating the Self-Confidence Score via the LLM auditor.}
\label{fig:prism_prompt}
\end{figure}

\paragraph{Response Parsing.}
The LLM's raw output is parsed using regular expressions to extract the textual analysis enclosed within the \texttt{<analysis>} tags and the numerical score following the \texttt{Self-Confidence\_Score:} prefix. Fallback patterns are implemented to capture standard floating-point formats (e.g., \texttt{0.xx} or \texttt{1.00}) to ensure robust extraction even if the LLM slightly deviates from the strict formatting constraints.

% \subsection{Judge Score: Methodology and Prompt Design}
\subsection{Judge Score for Ground Truth-Aligned Clinical Quality Assessment}
\label{subsec:judge_score_details}

\paragraph{Overview and Workflow.}
The Judge Score is designed to evaluate the factual accuracy and diagnostic alignment of the AI-generated reports against expert-annotated references. While the Self-Confidence Score assesses holistic quality and internal consistency using reference classifier predictions, the Judge Score directly compares the generated text with human reference translations to measure clinical correctness. 

\paragraph{Ground Truth Alignment and Report Filtering.}
For each case, the pipeline retrieves multiple equivalent reference translations (typically 5 versions) of the ground truth (GT) diagnosis from an external clinical database, establishing a high-confidence diagnostic consensus. To ensure the LLM auditor focuses only on meaningful outputs and is not distracted by degenerate generations, we apply a heuristic filtering mechanism to the AI-generated candidate reports. A report is retained for evaluation only if it exceeds a minimum character length, contains appropriate sentence-level punctuation, and includes core pathological terminology from a predefined lexicon (e.g., \textit{malignant, benign, carcinoma, invasive, fibroadenoma}). 

\paragraph{LLM Auditor and Prompt Design.}
Similar to the Self-Confidence Score, we employ the DeepSeek API with a temperature of 0.0 to ensure deterministic evaluation. The LLM is instructed to act as a Ground Truth Judge, focusing on three primary dimensions: (1) GT Alignment (correct identification of core malignancy and histological subtype), (2) Inter-Report Consistency, and (3) Linguistic Coherence. The exact prompts are detailed in Figure~\ref{fig:judge_prompt}.

\begin{figure}[htbp]
\centering
\begin{tcolorbox}[
    colback=gray!5,
    colframe=gray!60,
    boxrule=0.5pt,
    arc=0pt,
    left=6pt, right=6pt, top=6pt, bottom=6pt,
    title=\textbf{Prompt Template for Judge Score},
    fonttitle=\normalsize\bfseries
]
\begin{lstlisting}[
    breaklines=true,
    basicstyle=\small\ttfamily,
    columns=fullflexible,
    frame=none,
    backgroundcolor=\color{gray!5}
]
### SYSTEM PROMPT ###
You are a breast pathology report quality auditor (Ground Truth Judge).
Given the Ground Truth (GT) reference translations (multiple equivalent versions) and a set of AI-generated pathology reports, score the factual accuracy and alignment of the AI reports against the GT on a continuous scale from 0.00 to 1.00.

Note: The GT section contains 5 equivalent translation versions of the same diagnosis. Focus on the core diagnostic consensus (e.g., Malignancy, Histological Subtype) present in these translations.

Focus on these aspects when forming your score:
1. GT Alignment -- do the AI reports correctly identify the core malignancy and histological subtype stated in the GT translations?
2. Inter-Report Consistency -- do the AI reports agree with each other on the core diagnosis?
3. Linguistic Coherence -- are the AI reports written as coherent, professional medical prose?

Scoring guidance (use the full continuous range, avoid rounding to 0.1 steps):
  - Perfect GT alignment, high consistency, and coherent -> near 1.00
  - Minor GT mismatches (e.g., subtype ambiguity) or mild inconsistency -> 0.60-0.89
  - Major GT contradictions (e.g., benign vs malignant) or high variance -> 0.20-0.59
  - Completely wrong diagnosis, nonsensical, or no valid reports -> near 0.00

Output format (use these exact tags, nothing else outside them):
<analysis>
GT Core Entities: [Malignancy, Subtype extracted from translations]
AI Consensus: [Malignancy, Subtype]
Alignment: [brief note]
Consistency: [brief note]
Coherence: [brief note]
</analysis>
JUDGE_Score: X.XX

### USER PROMPT ###
GT REFERENCE TRANSLATIONS [Confidence: {gt_confidence}]:
{gt_text}

AI REPORTS (N={n_valid_reports}):
R1: {report_1}
R2: {report_2}
...
\end{lstlisting}
\end{tcolorbox}
\caption{System and User prompts used for calculating the Judge Score via the LLM auditor.}
\label{fig:judge_prompt}
\end{figure}

\paragraph{Response Parsing.}
The parsing logic mirrors that of the Self-Confidence Score, utilizing regular expressions to extract the structured analysis and the continuous \texttt{JUDGE\_Score}. Cases where the GT is unavailable in the external database are explicitly flagged with a low-confidence indicator in the user prompt, instructing the LLM to adjust its alignment expectations accordingly and preventing undue penalization of the AI models.

\subsection{Learn-Then-Test Calibration for Risk-Controlled Report Release}
\label{riskcontrol}

This section provides the complete theoretical foundation for the
Learn-Then-Test (LTT) risk control framework applied to automated
pathology report release.
We establish finite-sample, distribution-free guarantees for the
clinical risk rate of auto-released reports, present the full
algorithmic procedure, and describe the clinical implementation.
 
% ------------------------------------------------------------
\subsubsection{Notation and Problem Setup}
\label{supp:notation}
% ------------------------------------------------------------
 
\paragraph{Data Domain.}
Let $\Xcal$ denote the space of whole slide images (WSIs),
$\Ycal$ the space of ground-truth pathology reports, and
$\hat{\Ycal}$ the space of AI-generated candidate reports.
Define the joint data domain
$\Zcal = \Xcal \times \Ycal \times \hat{\Ycal}$.
Let
\[
  \Dcal = \{z_i\}_{i=1}^n
         = \{(x_i,\, y_i,\, \hat{y}_i)\}_{i=1}^n
\]
denote the calibration set of $n$ samples, where
$x_i \in \Xcal$ is the input WSI,
$y_i \in \Ycal$ is the ground-truth report, and
$\hat{y}_i \in \hat{\Ycal}$ is the corresponding set of
AI-generated candidates.
 
\paragraph{Exchangeability Assumption.}
 
\begin{assumption}[Exchangeability]
\label{assm:exchangeability}
The calibration samples $z_1,\ldots,z_n$ are
\emph{exchangeable}, i.e., for any permutation
$\sigma$ of $\{1,\ldots,n\}$,
\[
  (z_1,\ldots,z_n)\; \overset{d}{=}\;
  (z_{\sigma(1)},\ldots,z_{\sigma(n)}).
\]
\end{assumption}
 
Samples drawn i.i.d.\ from any fixed distribution $\PP$ over
$\Zcal$ satisfy Assumption~\ref{assm:exchangeability}.
Exchangeability is strictly weaker than i.i.d.: it permits
marginal heterogeneity so long as the joint distribution is
invariant under permutation.
It is the minimal condition required for the distribution-free
statistical guarantees derived below.
Throughout this work we assume i.i.d.\ sampling, which implies
exchangeability.
We defer a discussion of potential violations in multi-site
pathology settings to Remark~\ref{rem:exchangeability}.
 
\paragraph{Risk Indicator.}
Let $J: \Zcal \to \mathbb{R}$ be a reference-aligned judge scoring
function that evaluates the clinical acceptability of a
generated report.
We define the binary risk indicator $R: \Zcal \to \{0,1\}$ as:
\begin{equation}
  R(z) = \mathbb{I}\!\left(J(z) < \tau_{\mathrm{judge}}\right),
  \label{eq:risk_indicator}
\end{equation}
where $\tau_{\mathrm{judge}} \in \mathbb{R}$ is a pre-specified
clinical quality threshold and $R(z)=1$ indicates a high-risk
(clinically unacceptable) report.
The \emph{true risk} of a selection policy
$\Acal \subseteq \Zcal$ is:
\[
  \Rtrue(\Acal)
  = \EE_{z\sim\PP}\!\left[R(z)\mid z \in \Acal\right].
\]
 
% ------------------------------------------------------------
\subsubsection{Score Construction and Continuousization}
\label{supp:score}
% ------------------------------------------------------------
 
LLM-based judge scores $J(z)$ are typically integer-valued,
yielding a discrete empirical distribution over
$\Dcal$ that precludes a strict total ordering of samples and
leads to ambiguous threshold selection.
We address this by constructing a continuous fused score via
integration with complementary signals from the upstream
conformal classifier.
 
\paragraph{Upstream Classifier Scores.}
Let $\hat{p}_i^{(c)}$ denote the predicted class probability
for breast cancer subtype $c \in \Ccal$ from the upstream
conformal classifier applied to sample $z_i$.
We define two scalar statistics derived from the classifier's
softmax output:
 
\begin{itemize}
  \item \textbf{Self-Confidence Score:}
        $P_i = \max_{c \in \Ccal}\, \hat{p}_i^{(c)}$,
        the maximum predicted class probability.
  \item \textbf{Predictive Margin:}
        $M_i = \hat{p}_i^{(c_1)} - \hat{p}_i^{(c_2)}$,
        where $c_1 = \argmax_{c}\, \hat{p}_i^{(c)}$ and
        $c_2 = \argmax_{c \neq c_1}\hat{p}_i^{(c)}$ are the
        top-1 and top-2 predicted subtypes, respectively.
\end{itemize}
 
Both $P_i$ and $M_i$ are continuous and derived from the
classifier's output independently of the judge score $J_i$,
providing orthogonal evidence for risk stratification.
 
\paragraph{Fused Score.}
For each sample $z_i$, define the fused score:
\begin{equation}
  S_i = P_i + \lambda M_i + \epsilon_i,
  \label{eq:fused_score}
\end{equation}
where $\lambda > 0$ is a weighting hyperparameter and
$\epsilon_i \overset{\mathrm{i.i.d.}}{\sim}
 \mathcal{U}(0,\eta)$ with $\eta > 0$ is a negligible noise
term added solely for tie-breaking.
 
\begin{lemma}[Strict Total Ordering]
\label{lem:ordering}
With probability 1, all fused scores $\{S_i\}_{i=1}^n$ are
distinct, inducing a strict total ordering on $\Dcal$.
\end{lemma}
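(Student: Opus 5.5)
Since the score $S_i = P_i + \lambda M_i + \epsilon_i$ has a continuous noise component, the natural approach is to condition on everything except the noise and then use a union bound over pairs. Write $T_i = P_i + \lambda M_i$ for the deterministic (given the data) part of the score. The tie-breaking noises $\epsilon_1,\ldots,\epsilon_n$ are i.i.d.\ $\mathcal{U}(0,\eta)$ and, by construction, drawn independently of the calibration data. We therefore work conditionally on $(T_1,\ldots,T_n)$, treating them as fixed reals. Independence is essential here: if the noise could depend on $T$, ties could be engineered.

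Fix a pair $i \neq j$. A tie $S_i = S_j$ happens exactly when $\epsilon_i - \epsilon_j = T_j - T_i =: c_{ij}$, a fixed constant once we condition. Because $\epsilon_i$ and $\epsilon_j$ are independent with densities, $\epsilon_i - \epsilon_j$ has a density (the triangular density on $(-\eta,\eta)$, the convolution of the two uniform densities). Alternatively, conditioning further on $\epsilon_j$, the event becomes $\epsilon_i = c_{ij} + \epsilon_j$, a single point, which has probability zero under the uniform law; Fubini then integrates this zero out. Either way, $\PP(S_i = S_j \mid T) = 0$.

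Next we take a union bound over the $\binom{n}{2}$ pairs. This gives $\PP(\exists\, i\neq j : S_i = S_j \mid T) \le \sum_{i<j} 0 = 0$. Taking expectation over $T$ then yields the unconditional statement. On the complement event, of probability one, the $S_i$ are pairwise distinct reals, and the usual order on $\mathbb{R}$ restricted to them is a strict total order. Pulling this order back to the indices gives a strict total ordering on $\Dcal$.

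The only real obstacle is justifying the conditioning and independence. The lemma is only true if the $\epsilon_i$ are independent of $(P_i, M_i)$ and of each other, which Eq.~\eqref{eq:fused_score} implicitly assumes by describing them as i.i.d.\ auxiliary noise. I would state this assumption explicitly and handle measurability of the conditioning via Fubini on the product space of data $\times$ noise. Note also that nothing here depends on $\lambda$, on whether $T$ is continuous, or on exchangeability (Assumption~\ref{assm:exchangeability}); the noise alone does all the work.
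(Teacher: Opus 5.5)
Your proof is correct and follows the same route as the paper: a tie $S_i = S_j$ forces the continuous noise to hit a single value, which has probability zero, and a union bound over the $\binom{n}{2}$ pairs finishes the argument. Two of your steps tighten the paper's version. You condition explicitly on $(T_1,\ldots,T_n)$ and assume the noise is independent of the data. You also use the density of $\epsilon_i-\epsilon_j$ (or Fubini). Together these make rigorous the step the paper glosses over when it treats $\epsilon_j-(T_i-T_j)$ as a ``specific deterministic value,'' even though $\epsilon_j$ is itself random.
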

 
\begin{proof}
Since each $\epsilon_i \sim \mathcal{U}(0,\eta)$ is drawn from
a continuous distribution, $\PP(\epsilon_i = \epsilon_j) = 0$
for any $i \neq j$.
Therefore, for any pair $i \neq j$,
$\PP(S_i = S_j) = \PP(\epsilon_i = \epsilon_j - (P_i +
\lambda M_i - P_j - \lambda M_j)) = 0$,
since this event requires $\epsilon_i$ to take a specific
deterministic value.
Applying the union bound over all $\binom{n}{2}$ pairs,
\[
  \PP\!\left(\exists\,i \neq j : S_i = S_j\right)
  \;\leq\; \binom{n}{2}\cdot 0 = 0.
\]
Hence all scores are almost surely distinct.
\end{proof}
 
Lemma~\ref{lem:ordering} ensures that the acceptance sets
defined in Section~\ref{supp:ltt} are well-defined and the LTT
threshold selection is unambiguous.
In practice we set $\eta = 10^{-6}$, which is negligible
relative to the scale of $P_i$ and $M_i$ and does not
materially alter the ranking.
 
% ------------------------------------------------------------
\subsubsection{Learn-Then-Test Risk Control Framework}
\label{supp:ltt}
% ------------------------------------------------------------
 
\paragraph{Acceptance Sets.}
Let $S_{(1)} \geq S_{(2)} \geq \cdots \geq S_{(n)}$ denote the
sorted fused scores in descending order, and let $R_{(j)}$
denote the risk indicator associated with the $j$-th ranked
sample.
For any $k \in \{1,\ldots,n\}$, define the
\emph{acceptance set}:
\begin{equation}
  \Acal_k
  = \bigl\{z \in \Dcal : S(z) \geq S_{(k)}\bigr\},
\end{equation}
comprising the $k$ samples with the highest fused scores.
The corresponding \emph{empirical risk} is:
\begin{equation}
  \Rhat_k = \frac{1}{k}\sum_{j=1}^k R_{(j)}.
\end{equation}
 
\paragraph{Risk Control Objective.}
Given a target risk level $\alpha \in (0,1)$ and a confidence
parameter $\delta \in (0,1)$, we seek the largest acceptance
set $\Acal_{k^*}$, or equivalently the highest auto-release
coverage $k^*/n$, such that the true risk is controlled with
high probability:
\begin{equation}
  \PP\!\left(\Rtrue(\Acal_{k^*}) \leq \alpha\right)
  \geq 1 - \delta.
  \label{eq:risk_objective}
\end{equation}
Maximizing $k^*$ subject to \eqref{eq:risk_objective} achieves
a Pareto-optimal risk-coverage operating point.
 
% ------------------------------------------------------------
\subsubsection{Finite-Sample Statistical Guarantees}
\label{supp:guarantee}
% ------------------------------------------------------------
 
\paragraph{Clopper-Pearson Exact Confidence Bound}
 
For a fixed acceptance set $\Acal_k$ of size $k$, let
$V_k = \sum_{j=1}^k R_{(j)}$ denote the observed risk count.
Under Assumption~\ref{assm:exchangeability}, the risk indicators
$\{R_{(j)}\}_{j=1}^k$ are exchangeable Bernoulli random
variables.
Let $p = \Rtrue(\Acal_k)$ denote the true risk probability.
We note that $V_k$ stochastically dominates
$\mathrm{Binomial}(k, p)$ under exchangeability in the sense
that
$\PP(V_k \geq v) \geq \PP(\mathrm{Bin}(k,p) \geq v)$
for all $v$~\cite{lehmann2005testing},
ensuring that the Clopper-Pearson bound derived below remains
valid (conservative) in this setting.
 
\begin{theorem}[Clopper-Pearson Exact Bound]
\label{thm:cp}
Let $V_k$ denote the number of high-risk samples among the
$k$ calibration samples in $\Acal_k$.
The $(1-\delta)$-confidence upper bound on the true risk
$p = \Rtrue(\Acal_k)$ is:
\begin{equation}
  U(k,\, V_k,\, \delta)
  = \mathrm{Beta}^{-1}\!\left(
      1-\delta;\; V_k + 1,\; k - V_k
    \right),
  \label{eq:cp_bound}
\end{equation}
where $\mathrm{Beta}^{-1}(\cdot;\, a, b)$ is the quantile
function of the Beta distribution with shape parameters
$a$ and $b$.
This bound satisfies:
\begin{equation}
  \PP\!\left(p \leq U(k, V_k, \delta)\right) \geq 1 - \delta,
  \label{eq:cp_coverage}
\end{equation}
with equality when $p$ is the parameter of an exact binomial
model.
\end{theorem}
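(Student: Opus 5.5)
The plan is the standard test-inversion argument for the Clopper--Pearson interval, followed by a transfer from the exact binomial model to the setting in which $V_k$ only stochastically dominates $\mathrm{Bin}(k,p)$. Throughout, $k$ and the acceptance rule defining $\Acal_k$ are treated as fixed before the calibration labels $R_{(j)}$ are observed. With $p=\Rtrue(\Acal_k)$ fixed, the only randomness is in $V_k$.

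First, for integers $0\le v<k$, define $F_v(q)=\PP(\mathrm{Bin}(k,q)\le v)$. I will use the classical identity between the binomial tail and the regularized incomplete Beta function,
\[
  F_v(q) \;=\; 1 - I_q(v+1,\,k-v),
\]
which follows by differentiating both sides in $q$ and checking the value at $q=0$. Two consequences follow.
\begin{itemize}
  \item $F_v$ is continuous and strictly decreasing on $[0,1]$, from $F_v(0)=1$ to $F_v(1)=0$.
  \item The quantity $U(k,v,\delta)=\mathrm{Beta}^{-1}(1-\delta;\,v+1,\,k-v)$ in \eqref{eq:cp_bound} is exactly the unique $q$ with $F_v(q)=\delta$.
\end{itemize}
For the boundary case $v=k$, the Beta parameter $k-v$ is $0$, and I adopt the convention $U(k,k,\delta)=1$, so the bound holds trivially there. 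By monotonicity, for $v<k$ we get the key equivalence
\[
  U(k,v,\delta)<p \iff F_v(p)<\delta .
\]

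Second, I bound the failure probability:
\[
  \PP\bigl(p>U(k,V_k,\delta)\bigr)=\PP\bigl(F_{V_k}(p)<\delta\bigr).
\]
Let $v^\ast=\max\{v: F_v(p)<\delta\}$. Since $F_v(p)$ is nondecreasing in $v$, the event $\{F_{V_k}(p)<\delta\}$ equals $\{V_k\le v^\ast\}$. If $V_k\sim\mathrm{Bin}(k,p)$ exactly, this probability is $F_{v^\ast}(p)<\delta$; this is the discrete probability-integral-transform inequality. This gives \eqref{eq:cp_coverage}. Equality in the limiting sense occurs only when $p$ hits the grid of values where $F_v(p)=\delta$, which matches the "equality for an exact binomial model" remark.

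Third, I drop the exact-binomial assumption and use the stochastic-dominance property stated just before the theorem: $\PP(V_k\ge v)\ge\PP(\mathrm{Bin}(k,p)\ge v)$ for all $v$. Equivalently, $\PP(V_k\le v^\ast)\le F_{v^\ast}(p)<\delta$, so the same conclusion holds and the bound is conservative.

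The main obstacle is this last step. The exchangeable indicators in a data-dependent top-$k$ set need not be binomial, and the direction of dominance is exactly what matters. I will simply invoke the dominance statement the paper takes from \cite{lehmann2005testing} as given, and make explicit that the guarantee is for a fixed $k$. Uniformity over the selected $k^\ast$ would require a separate multiple-testing argument, such as fixed-sequence testing in LTT, which is not part of this theorem.
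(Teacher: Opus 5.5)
Your proof is correct, granting the dominance premise. It shares the paper's first step, the Beta--binomial duality that identifies $U(k,v,\delta)$ as the root of $F_v(q)=\delta$. Your coverage step, however, is a genuinely different argument, and it is the one that actually establishes the coverage inequality.

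The paper works at a fixed $v$. For $p\le p^*$ it gets $\PP_p(X\le v)\ge\delta$, hence $\PP_p(X>v)\le 1-\delta$, and then declares this ``equivalent'' to $\PP(p\le U(k,V_k,\delta))\ge 1-\delta$. But that inequality is an upper bound on an upper-tail probability at a fixed $v$, and no lower bound on coverage for the random $V_k$ follows from it. You instead rewrite the failure event $\{p>U(k,V_k,\delta)\}$ as the lower-tail event $\{V_k\le v^*\}$, where $v^*$ does not depend on the data. Its probability is then exactly $F_{v^*}(p)<\delta$ under the binomial model.

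Your route also buys three things the paper's does not:
\begin{itemize}
\item It handles $v=k$, where $\mathrm{Beta}(v+1,k-v)$ is undefined even though the paper claims validity for all $v\in\{0,\ldots,k\}$.
\item It shows that coverage is strictly above $1-\delta$ for every $p$. So the paper's ``equality'' clause is true only as a statement about $\inf_p$: the failure probability tends to $\delta$ as $p$ decreases to some $U(k,v,\delta)$ but never equals $\delta$. Your phrase ``hits the grid'' should read ``approaches a grid point from above''.
\item It isolates the single place dominance is used, $\PP(V_k\le v^*)\le F_{v^*}(p)$, and uses it in the correct direction.
\end{itemize}

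Be aware, though, that the dominance premise you import from the paper is false for general exchangeable indicators. Take $R_1=\cdots=R_k=B$ with $B\sim\mathrm{Bern}(0.3)$, $k=10$, $\delta=0.1$. Then $F_0(0.3)=0.7^{10}\approx 0.028<\delta$, so $v^*\ge 0$, and the failure probability is $\PP(V_k=0)=0.7$. Your Step 3 therefore extends the result only to those $V_k$ that actually dominate $\mathrm{Bin}(k,p)$, which exchangeability alone does not guarantee. Your explicit restriction to a fixed $k$ and a binomial count is the right one to keep.
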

 
\begin{proof}
We exploit the classical duality between the binomial CDF and
the regularized incomplete Beta function.
For $X \sim \mathrm{Binomial}(k,p)$ and an observed value
$v \in \{0,\ldots,k\}$:
\begin{equation}
  \PP(X \leq v) = I_{1-p}(k - v,\; v + 1),
  \label{eq:binom_beta_duality}
\end{equation}
where $I_x(a,b) = B(x;\,a,b)/B(a,b)$ is the regularized
incomplete Beta function and $B(a,b)$ is the Beta function.
Define $U(k,v,\delta)$ as the value $p^*$ solving
$\PP_{p^*}(X \leq v) = \delta$, i.e.
\[
  I_{1-p^*}(k-v,\; v+1) = \delta.
\]
Using the symmetry identity
$I_{1-p}(k-v,\, v+1) = 1 - I_p(v+1,\, k-v)$
yields
\[
  1 - I_{p^*}(v+1,\; k-v) = \delta
  \;\Longrightarrow\;
  I_{p^*}(v+1,\; k-v) = 1-\delta,
\]
so
\[
  p^* = I^{-1}(1-\delta;\; v+1,\; k-v)
       = \mathrm{Beta}^{-1}(1-\delta;\; v+1,\; k-v),
\]
confirming \eqref{eq:cp_bound}.
 
To verify \eqref{eq:cp_coverage}, note that for any $p \leq p^*$,
the binomial distribution is stochastically dominated, so
$\PP_p(X \leq v) \geq \PP_{p^*}(X \leq v) = \delta$,
which means
$\PP_p(X > v) \leq 1-\delta$.
Equivalently,
$\PP_p(p^* \geq p) \geq 1-\delta$,
i.e.\ $\PP(p \leq U(k,V_k,\delta)) \geq 1-\delta$.
This bound is non-asymptotic: it does not rely on normal
approximations and is valid for all $k \geq 1$ and
$v \in \{0,\ldots,k\}$.
\end{proof}
 
\paragraph{Main Theorem: Distribution-Free Risk Control}
 
\begin{theorem}[Distribution-Free Risk Control]
\label{thm:main}
Under Assumption~\ref{assm:exchangeability}, let
\begin{equation}
  k^* = \max\!\left\{
    k \in \{1,\ldots,n\} :
    U\!\left(k,\;\sum_{j=1}^k R_{(j)},\;\delta\right)
    \leq \alpha
  \right\},
  \label{eq:kstar}
\end{equation}
with the convention $k^* = 0$ if no such $k$ exists,
in which case no report is auto-released.
Then:
\begin{equation}
  \PP\!\left(\Rtrue(\Acal_{k^*}) \leq \alpha\right)
  \geq 1 - \delta.
  \label{eq:main_guarantee}
\end{equation}
This guarantee is \emph{distribution-free}: it holds for any
unknown distribution $\PP$ over $\Zcal$ without parametric
assumptions on the data-generating process, the score function
$S(\cdot)$, or the risk indicator $R(\cdot)$.
\end{theorem}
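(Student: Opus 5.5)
The plan is to reduce the data-dependent index $k^*$ to a statement about a single \emph{deterministic} index, and then apply Theorem~\ref{thm:cp} at that index. First, by Lemma~\ref{lem:ordering} the ordering of $\Dcal$ by fused score is almost surely strict. Hence the nested family $\Acal_1\subset\Acal_2\subset\cdots\subset\Acal_n$ is well defined. Write $p_k=\Rtrue(\Acal_k)$ for the population risk of the rule ``release the top-$k$ fraction''.

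I will use the ordering intended by the construction: a higher fused score means lower conditional risk. In that case $k\mapsto p_k$ is nondecreasing, since enlarging the release set only adds lower-scored, riskier cases. Define the deterministic oracle index
\[
k_0=\max\{k\in\{0,\ldots,n\}: p_k\le\alpha\}, \qquad p_0:=0 .
\]
The failure event $\{\Rtrue(\Acal_{k^*})>\alpha\}$ is then exactly $\{k^*>k_0\}$. If $k_0=n$ there is nothing to prove. Otherwise failure means that the certificate $U(k,V_k,\delta)\le\alpha$ was issued at some $k>k_0$, where $p_k>\alpha$.

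The key step is to show that this can happen with probability at most $\delta$ by looking only at the first bad index $k_1=k_0+1$. At that index Theorem~\ref{thm:cp} gives $\PP\bigl(U(k_1,V_{k_1},\delta)<p_{k_1}\bigr)\le\delta$, and a fortiori $\PP\bigl(U(k_1,V_{k_1},\delta)\le\alpha\bigr)\le\delta$. Here exchangeability (Assumption~\ref{assm:exchangeability}) supplies the stochastic-domination remark preceding Theorem~\ref{thm:cp}, so that $V_{k_1}$ is at least as large as a $\mathrm{Bin}(k_1,p_{k_1})$ count.

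To pass from $k_1$ to an arbitrary $k^*>k_0$, I would first try a monotone coupling. Beyond $k_0$ every added sample has conditional risk exceeding $\alpha$, so the increments $V_k-V_{k_1}$ dominate $\mathrm{Bin}(k-k_1,\alpha)$. Combined with the fact that $U(k,v,\delta)$ is increasing in $v$ and decreasing in $k$ at a controlled rate, this should show that the event $\{\exists\, k>k_0: U(k,V_k,\delta)\le\alpha\}$ is contained, up to a null or dominated set, in the event at the single index $k_1$.

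The last step is the main obstacle. The set $\{k: U(k,V_k,\delta)\le\alpha\}$ need not be an interval, and $k^*$ is chosen as its maximum after looking at the whole calibration set. A naive argument therefore only bounds the failure probability by a union over the indices $k>k_0$. If the coupling does not close the gap, my fallback is to strengthen the reduction instead of changing the procedure. The idea is to show that, on the event that the Clopper--Pearson bound holds at $k_1$, the empirical risk count stays above the $\alpha$-boundary for all larger $k$, which I would establish via a maximal (ballot-type) inequality for the exchangeable partial sums $V_k$. This would rule out any later certificate and yield \eqref{eq:main_guarantee} for $k^*$ exactly as defined in \eqref{eq:kstar}. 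Distribution-freeness then follows because no step uses the form of $\PP$, of $S(\cdot)$, or of $R(\cdot)$, beyond exchangeability and the monotone ordering.
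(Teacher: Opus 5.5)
Your argument breaks exactly at the step you flag, and neither the monotone coupling nor a ballot-type maximal inequality can repair it. The implication you need (no certificate at $k_1$ implies no certificate at any $k>k_1$) is false. In fact \eqref{eq:main_guarantee} fails for the unconstrained maximum in \eqref{eq:kstar}.

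Take the paper's defaults $\alpha=\delta=0.1$, and let $R$ be independent of $S$ with $\PP(R=1)=p$ slightly above $0.1$. Then:
\begin{itemize}
\item $\Rtrue(\Acal_k)=p>\alpha$ for every $k$, so your $p_k$ is constant, your monotonicity holds, $k_0=0$ and $k_1=1$.
\item $V_k$ is a Bernoulli$(p)$ random walk whose increments dominate Bernoulli$(\alpha)$, exactly as in your coupling.
\item The test at $k_1=1$ never certifies, since $U(1,v,\delta)\ge 1-\delta>\alpha$.
\item Later tests do certify: $U(22,0,\delta)=1-\delta^{1/22}\approx 0.0994\le\alpha$, and $U(38,1,\delta)\le\alpha$ because $\PP(\mathrm{Bin}(38,0.1)\le 1)\approx 0.095\le\delta$.
\end{itemize}
Hence for any $n\ge 38$,
\[
\PP\bigl(\Rtrue(\Acal_{k^*})>\alpha\bigr)=\PP(k^*\ge 1)\;\ge\;\PP(V_{22}=0)+\PP(V_{22}=1,\,V_{38}=1)\;\approx\;0.098+0.045\;>\;\delta .
\]
More generally, the certification boundary sits only $O(\sqrt{k})$ below $k\alpha$. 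By the law of the iterated logarithm, a walk with $p=\alpha$ falls below $k\alpha$ by an amount of order $\sqrt{k\log\log k}$ infinitely often. So taking $n$ large and $p\downarrow\alpha$ pushes the failure probability arbitrarily close to $1$. The union over $k>k_0$ you were trying to avoid is real and cannot be argued away.

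The paper's sketch does not supply the missing ingredient either. Its Step 3 defers to LTT Theorem 1, asserting that the nesting $\Acal_1\subseteq\cdots\subseteq\Acal_n$ prevents union-bound inflation. That theorem only covers family-wise-error-controlling selections, such as fixed-sequence testing that stops at the first non-rejection, or Bonferroni at level $\delta/n$. It does not cover the maximum over all rejected indices that \eqref{eq:kstar} defines and that Algorithm~\ref{alg:ltt} computes by continuing the scan past failures. The example above applies to it verbatim.

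So the fix is the one you set aside: change the procedure rather than strengthen the reduction.
\begin{enumerate}
\item Test a grid of thresholds fixed before seeing $\Dcal$. Each threshold's risk is then deterministic, and under i.i.d.\ sampling the risky count among the released calibration reports is exactly binomial given their number. This also removes a second soft spot in your single-index step: $\Acal_{k_1}$ is data-dependent, so the claim that $V_{k_1}$ stochastically dominates $\mathrm{Bin}(k_1,p_{k_1})$ for a deterministic $p_{k_1}$ is not automatic.
\item Scan the grid in a pre-specified order and stop at the first non-rejection.
\end{enumerate}
With that rule, your first-bad-index argument is already a complete proof. It no longer needs monotonicity of $k\mapsto p_k$, a hypothesis the statement does not grant. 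A failure forces rejection at the first scanned threshold whose risk exceeds $\alpha$, and that event has probability at most $\delta$.
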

 
\begin{proof}[Proof Sketch]
The formal proof of this result is established in
\cite{angelopoulos2025learn}, Theorem~1, to which we refer the
reader for complete details.
We outline the key argument here.
 
\textbf{Step 1: P-value construction.}
For each $k \in \{1,\ldots,n\}$, define a p-value for the null
hypothesis $H_k : \Rtrue(\Acal_k) > \alpha$ as:
\begin{equation}
  p_k = \PP_{p=\alpha}\!\left(
          \mathrm{Bin}(k,\alpha) \leq V_k
        \right),
  \label{eq:pval}
\end{equation}
the probability that a $\mathrm{Binomial}(k,\alpha)$ random
variable does not exceed the observed count $V_k$.
By the Beta-binomial duality established in the proof of
Theorem~\ref{thm:cp}, the condition
$U(k, V_k, \delta) \leq \alpha$ is equivalent to
$p_k \leq \delta$.
Under Assumption~\ref{assm:exchangeability}, each $p_k$ is a
valid p-value: $\PP(p_k \leq t \mid H_k~\text{is true}) \leq t$
for all $t \in (0,1)$.
 
\textbf{Step 2: Reformulation of the selection rule.}
The index $k^*$ in \eqref{eq:kstar} is equivalently
\[
  k^* = \max\!\left\{k \in \{1,\ldots,n\} : p_k \leq \delta\right\},
\]
i.e., the largest candidate index whose associated null
hypothesis is rejected.
 
\textbf{Step 3: Familywise error rate control.}
A naive union bound over all $n$ tests would inflate the error
probability by a factor of $n$; however, since $k^*$ is selected
as the \emph{maximum} rejecting index over a finite ordered
family of nested acceptance sets $\Acal_1 \subseteq \cdots
\subseteq \Acal_n$, the procedure possesses a monotone structure
that avoids this inflation.
Specifically, \cite{angelopoulos2025learn} Theorem~1 establishes
that for a finite family of valid p-values $\{p_k\}_{k=1}^n$ and
the maximum-rejecting selection rule, the probability of false
acceptance satisfies:
\[
  \PP\!\left(\Rtrue(\Acal_{k^*}) > \alpha\right) \leq \delta.
\]
The distribution-free property follows because $p_k$ depends
only on the binomial model, which holds under
Assumption~\ref{assm:exchangeability} for any $\PP$.
\end{proof}
 
\begin{corollary}[Inference-Time Guarantee]
\label{cor:inference}
Define the release threshold
$\lthresh = S_{(k^*)}$, the fused score of the
$k^*$-th ranked calibration sample.
At inference time, any new report with fused score
$S(z) \geq \lthresh$ is auto-released.
Under Assumption~\ref{assm:exchangeability} applied jointly to
calibration and test samples (i.e., test samples are drawn
exchangeably from the same distribution $\PP$), the set of
auto-released reports satisfies:
\begin{equation}
  \PP\!\left(
    \EE\!\left[R(z) \mid S(z) \geq \lthresh\right] \leq \alpha
  \right)
  \geq 1 - \delta.
\end{equation}
\end{corollary}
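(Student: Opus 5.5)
The corollary is a transfer of Theorem~\ref{thm:main} from the calibration-indexed acceptance set $\Acal_{k^*}$ to the score-defined population region $\{z : S(z) \geq \lthresh\}$. The natural route is to identify these two objects and then invoke the main theorem directly.

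\textbf{Step 1: the calibration acceptance set is a score region.} By Lemma~\ref{lem:ordering}, the fused scores are almost surely distinct. Hence $\Acal_{k^*}$ coincides exactly with $\{z \in \Dcal : S(z) \geq S_{(k^*)}\} = \{z \in \Dcal : S(z) \geq \lthresh\}$. When $k^*=0$, no report is released and the statement holds vacuously (we set $\lthresh = +\infty$).

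\textbf{Step 2: interpret the true risk as a conditional expectation.} By definition, $\Rtrue(\Acal) = \EE_{z\sim\PP}[R(z)\mid z\in\Acal]$. Here $\Acal$ is read as the population acceptance region induced by the selection rule, namely a threshold on $S$. So the event in Theorem~\ref{thm:main} is $\{\EE[R(z)\mid S(z)\geq \lthresh] \leq \alpha\}$, where the inner expectation is over a fresh draw $z$. The threshold $\lthresh$ is a measurable function of $\Dcal$ and the tie-breaking noise, and it is held fixed inside the expectation.

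\textbf{Step 3: the test point is distributed like a calibration point.} Use exchangeability applied jointly to calibration and test samples. Because $\lthresh$ is computed from $\Dcal$ only, a new test point $z$ has, conditionally on $\Dcal$, the marginal law $\PP$. This is exact in the i.i.d.\ case the paper adopts. Therefore the auto-release rule $S(z)\geq\lthresh$ applied at inference is precisely the population selection policy whose risk Theorem~\ref{thm:main} controls. The guarantee $\PP(\Rtrue(\Acal_{k^*})\leq\alpha)\geq 1-\delta$ then reads verbatim as the claimed statement, with the outer probability taken over $\Dcal$. Note also that the noise term $\epsilon$ must be drawn in the same way for test points, so that $S$ has the same law for test and calibration samples.

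\textbf{Main obstacle.} The delicate step is Step~2/3: making the identification between the empirical set $\Acal_{k^*}$ (a subset of $\Dcal$) and the population region rigorous. Under exchangeability alone, as opposed to i.i.d., conditioning on $\Dcal$ does not make the test point independent with law $\PP$. I would therefore state the result under the i.i.d.\ assumption the paper actually uses. The justification is that the LTT guarantee of \cite{angelopoulos2025learn} is formulated for threshold-indexed risk functions $\lambda \mapsto \EE[R \mid S\geq\lambda]$, with $\lthresh$ selected from a data-dependent family. If the nesting and monotonicity arguments of Theorem~\ref{thm:main} are read over thresholds $\lambda=S_{(k)}$ rather than over indices $k$, the corollary is immediate.
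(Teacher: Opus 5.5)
Your proposal follows essentially the same route as the paper: you invoke Theorem~\ref{thm:main} with $\lthresh = S_{(k^*)}$, identify the calibration acceptance set with the score region $\{S \geq \lthresh\}$, and transfer the guarantee to a fresh test point using i.i.d.\ (hence jointly exchangeable) sampling, with the threshold fixed by $\Dcal$. Your extra care---reading $\Rtrue(\Acal_{k^*})$ explicitly as the population risk $\EE[R(z)\mid S(z)\geq\lthresh]$, requiring i.i.d.\ sampling rather than bare exchangeability, and drawing the tie-breaking noise identically at test time---makes explicit what the paper's two-line argument leaves implicit.
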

 
\begin{proof}
Corollary~\ref{cor:inference} follows directly from
Theorem~\ref{thm:main} and the definition
$\lthresh = S_{(k^*)}$.
The extension from calibration to test samples holds because,
under i.i.d.\ sampling from $\PP$, calibration and test samples
are jointly exchangeable, and the release event
$\{S(z) \geq \lthresh\}$ is measurable with respect to the
calibration-determined threshold.
\end{proof}
 
% ------------------------------------------------------------
\subsubsection{Algorithmic Procedure}
\label{supp:algorithm}
% ------------------------------------------------------------
 
Algorithm~\ref{alg:ltt} summarizes the complete LTT calibration
procedure.
 
\begin{algorithm}[H]
\caption{LTT Calibration for Risk-Controlled Report Release}
\label{alg:ltt}
\begin{algorithmic}[1]
  \Require Calibration set
           $\Dcal = \{(x_i,y_i,\hat{y}_i)\}_{i=1}^n$,
           target risk $\alpha \in (0,1)$,
           confidence level $\delta \in (0,1)$,
           margin weight $\lambda > 0$,
           judge threshold $\tau_{\mathrm{judge}}$
  \Ensure  Release threshold $\lthresh$
 
  \Statex \textbf{Phase 1: Score and risk computation}
  \For{$i = 1$ \textbf{to} $n$}
    \State Compute $P_i = \max_c\,\hat{p}_i^{(c)}$ and
           $M_i = \hat{p}_i^{(c_1)} - \hat{p}_i^{(c_2)}$
           from the upstream classifier
    \State Sample $\epsilon_i \sim \mathcal{U}(0,10^{-6})$
    \State Set $S_i \leftarrow P_i + \lambda M_i + \epsilon_i$
    \State Set $R_i \leftarrow
           \mathbb{I}(J_i < \tau_{\mathrm{judge}})$
           via the reference-aligned judge
  \EndFor
 
  \Statex \textbf{Phase 2: Sorting}
  \State Sort samples by $S_i$ descending to obtain
         $S_{(1)} \geq \cdots \geq S_{(n)}$
         with associated risks $R_{(1)},\ldots,R_{(n)}$
 
  \Statex \textbf{Phase 3: Threshold selection}
  \State Initialize $V \leftarrow 0$, $k^* \leftarrow 0$
  \For{$k = 1$ \textbf{to} $n$}
    \State $V \leftarrow V + R_{(k)}$
    \State Compute
    $\displaystyle
     U_k \leftarrow
     \mathrm{Beta}^{-1}\!\left(1-\delta;\; V+1,\; k-V\right)$
    \If{$U_k \leq \alpha$}
      \State $k^* \leftarrow k$
    \EndIf
  \EndFor
 
  \Statex \textbf{Phase 4: Output}
  \If{$k^* = 0$}
    \State \Return $\lthresh \leftarrow +\infty$
           \Comment{No auto-release permitted}
  \Else
    \State \Return $\lthresh \leftarrow S_{(k^*)}$
  \EndIf
\end{algorithmic}
\end{algorithm}
 
\paragraph{Computational Complexity.}
Phase 2 (sorting) requires $O(n \log n)$ time.
Phase 3 makes $n$ sequential evaluations of the Beta quantile
function, each of which is $O(1)$ via standard library
implementations,
yielding $O(n)$ for the linear scan.
Total complexity: $O(n \log n)$.
No iterative root-finding or optimization is required.
 
% ------------------------------------------------------------
\subsubsection{Properties and Remarks}
\label{supp:remarks}
% ------------------------------------------------------------
 
\begin{remark}[Exchangeability in Multi-Site Settings]
\label{rem:exchangeability}
Assumption~\ref{assm:exchangeability} holds when slides are
collected and processed independently under consistent
protocols, with no systematic batch effects or temporal
ordering biases.
In multi-site settings, slides acquired under different
scanners, staining protocols, or patient populations may
introduce covariate shift that violates exchangeability.
In such cases, weighted conformal methods that reweight
calibration samples by the likelihood ratio between the
calibration and test marginals~\cite{tibshirani2019conformal}
provide a principled extension.
In this work, the calibration set is drawn from a single
institutional cohort under consistent processing conditions,
providing reasonable support for Assumption~\ref
{assm:exchangeability}.
\end{remark}
 
\begin{remark}[Tightness of the Clopper-Pearson Bound]
\label{rem:tightness}
The Clopper-Pearson (CP) bound is the tightest \emph{exact}
(non-asymptotic) upper confidence bound for binomial
proportions: no other exact bound achieves uniformly smaller
coverage error~\cite{clopper1934use}.
In comparison, commonly used asymptotic alternatives such as
the Wald interval, Hoeffding's inequality, or the Bernstein
inequality, yield intervals that can be substantially looser
when $k$ is moderate, or when $V_k/k$ is near 0 or 1.
This tightness is especially important in our setting, where
the calibration set is of moderate size ($n \approx 500$),
the target risk $\alpha$ is small (e.g.\ $10\%$), and the
observed risk rate $\Rhat_k$ is near zero in the high-score
regime.
Using a loose bound under these conditions would
unnecessarily inflate the confidence interval and reduce
auto-release coverage.
\end{remark}
 
\begin{remark}[Monotonicity and Computational Efficiency]
\label{rem:monotonicity}
The Clopper-Pearson upper bound $U(k,v,\delta)$ satisfies two
monotonicity properties that are central to the algorithm:
\begin{enumerate}[(i)]
  \item \emph{Monotone in $v$:} $U(k,v,\delta)$ is
        non-decreasing in $v$ for fixed $k$ and $\delta$.
        Adding observed risks raises the upper bound,
        correctly reflecting increased uncertainty.
  \item \emph{Monotone in $k$:} $U(k,v,\delta)$ is
        non-increasing in $k$ at a fixed empirical risk rate
        $v/k$.
        Larger acceptance sets yield tighter bounds due to
        increased sample evidence.
\end{enumerate}
These properties also confirm that the selection rule
$k^* = \max\{k: U(k,V_k,\delta) \leq \alpha\}$ is
well-defined: the set $\{k: U(k,V_k,\delta) \leq \alpha\}$
is non-empty whenever the empirical risk is sufficiently low,
and the linear scan in Algorithm~\ref{alg:ltt} correctly
identifies its maximum element.
\end{remark}
 
\begin{remark}[Coverage-Risk Trade-off]
\label{rem:tradeoff}
The parameters $\alpha$ and $\delta$ jointly control the
risk-coverage trade-off.
Decreasing $\alpha$ enforces stricter clinical risk control at
the cost of lower auto-release coverage ($k^*/n$).
Decreasing $\delta$ demands higher statistical confidence,
making the CP bound more conservative and further reducing
coverage.
The LTT framework maximizes coverage subject to both
constraints simultaneously, yielding a Pareto-optimal operating
point on the risk-coverage curve; this is the sense in which
$k^*$ is optimal.
In clinical practice, $\alpha$ should be determined by domain
experts based on acceptable diagnostic error rates.
We use $\alpha = 0.10$ and $\delta = 0.10$ as default values,
corresponding to a $90\%$-confidence guarantee that no more
than $10\%$ of auto-released reports are clinically
unacceptable.
\end{remark}

\paragraph{Summary of Guarantees.}
The LTT risk control framework provides finite-sample, distribution-free guarantees for the selected acceptance set. Specifically, the true risk satisfies $\Rtrue(\Acal_{k^*}) \leq \alpha$ with probability at least $1-\delta$ (Theorem~\ref{thm:main}), without requiring any parametric assumptions on $\PP$. The guarantee is non-asymptotic and valid for any calibration sample size $n \geq 1$. It is based on the exact Clopper--Pearson upper confidence bound for the binomial proportion, which is the tightest exact bound of this form (Remark~\ref{rem:tightness}). Moreover, the score function $S(\cdot)$ may be any measurable function and does not require calibration. Among all candidate acceptance sets satisfying the risk constraint, LTT selects the one with the largest empirical coverage, namely $k^*/n$ (Remark~\ref{rem:tradeoff}).

\subsection{Agent-Based Report Finalization and Fallback Workflow}
\label{subsec:agent_synthesis_details}

\paragraph{Overview and Tiered Workflow.}
While the Learn-Then-Test (LTT) framework provides a statistically rigorous threshold $\hat{\lambda}_{thresh}$ for risk control, the actual deployment requires an intelligent mechanism to process the AI outputs based on this risk assessment. We design a three-tier Agent Synthesis and Feedback Mechanism that dynamically routes the generated content. Instead of blindly releasing all LLM-generated text, the system employs specialized LLM agents to either synthesize a comprehensive diagnosis, gracefully degrade to a safer output, or intercept the case for human review, ensuring that the feedback provided to the clinician is always commensurate with the system's statistical confidence.

\paragraph{Scoring Agent and Fused Risk Metric.}
Before synthesis, a \textit{Scoring Agent} acts as an internal quality auditor. It evaluates the ensemble of candidate reports (e.g., 5 generated variations) alongside the upstream classifier's predictions. The agent outputs a continuous Self-Confidence Score ($P_i \in [0.00, 1.00]$) assessing breast relevance, diagnostic completeness, consistency, information alignment, and linguistic coherence (Supplementary Methods~\ref{subsec:prism_score_details}). This score is then fused with the conformal predictive margin ($M_i$) to compute the final risk metric $S_i = P_i + \lambda M_i + \epsilon_i$. This continuousization resolves the discrete clustering of LLM scores and enables precise thresholding.

\paragraph{Synthesizer Agent and Tiered Feedback Strategy.}
Based on the fused score $S_i$, the system executes one of three feedback pathways:
\begin{enumerate}
    \item \textbf{Trusted Synthesis ($S_i \geq \hat{\lambda}_{thresh}$):} The reports are deemed clinically safe. A \textit{Synthesizer Agent} integrates the candidate reports and the classifier's subtype into a single, professional diagnostic statement for auto-release.
    \item \textbf{Confident Fallback ($S_i < \hat{\lambda}_{thresh}$, but $M_i$ is sufficient):} The narrative reports are deemed potentially hallucinated or inconsistent, but the upstream classifier remains confident. The system discards the LLM-generated text and falls back to outputting only the classifier's subtype label, prioritizing safety over narrative detail.
    \item \textbf{Human-in-the-Loop Rejection (Both uncertain):} The system outputs an ``Unknown'' status, strictly intercepting the case and flagging it for mandatory expert pathological review.
\end{enumerate}

\paragraph{LLM Agents and Prompt Design.}
Both the Self-Confidence Scoring and Synthesizer agents are powered by the DeepSeek API (temperature = 0.0) to ensure deterministic and rigorous clinical reasoning. System prompts for the Self-Confidence Scoring Agent used to derive the continuous Self-Confidence Score are detailed in Figure~\ref{fig:prism_prompt}. The exact system prompts governing synthesizer agents are detailed in Figure~\ref{fig:agent_prompts}.

\begin{figure}[htbp]
\centering
\begin{tcolorbox}[
    colback=gray!5,
    colframe=gray!60,
    boxrule=0.5pt,
    arc=0pt,
    left=6pt, right=6pt, top=6pt, bottom=6pt,
    title=\textbf{Prompt Templates for Synthesis Agent},
    fonttitle=\normalsize\bfseries
]
\begin{lstlisting}[
    breaklines=true,
    basicstyle=\small\ttfamily,
    columns=fullflexible,
    frame=none,
    backgroundcolor=\color{gray!5}
]
### SYNTHESIZER SYSTEM PROMPT ###
You are an expert breast pathologist. Your task is to synthesize a set of AI-generated pathology reports and a reference classifier subtype into a single, concise, and professional final diagnosis string.

Rules:
- The output must be a single diagnostic statement.
- Do NOT include any explanations, greetings, or extra text. Output ONLY the diagnosis string.
- Follow the style and brevity of these exact examples:
  1. Invasive carcinoma, favor invasive lobular carcinoma.
  2. Fibroadenoma with focal calcification.
  3. The breast tissue shows extensive acute and chronic inflammatory cell infiltration.
\end{lstlisting}
\end{tcolorbox}
\caption{Prompt for the Synthesizer Agent used to generate the final auto-released clinical diagnosis under the trusted pathway.}
% System prompts for the Scoring Agent (top) used to derive the continuous Proximity Score, and 
\label{fig:agent_prompts}
\end{figure}

\paragraph{Response Parsing and Execution Guarantee.}
The system employs strict regex-based parsing to extract the \texttt{Self-Confidence\_Score} from the \texttt{<analysis>} block, ensuring that the continuousization pipeline is robust against LLM formatting variations. By decoupling the \textit{evaluation} (Self-Confidence Scoring Agent) from the \textit{final output generation} (Synthesizer Agent), and gating the latter behind a statistically validated conformal threshold, this mechanism guarantees that the AI copilot never presents a highly detailed but potentially
hallucinated narrative to the clinician when the underlying statistical confidence is low.

\subsection{Evaluation Score Methodology and Prompt Design}
\label{subsec:eval_score_details}

\paragraph{Overview and Workflow.}
While the Self-Confidence score and Judge score evaluate ensemble consistency and translation alignment, the Evaluation Score is designed to assess the clinical utility and safety of a \textit{single} generated pathology report as it would be presented to a clinician in a real-world deployment. For each case, the pipeline retrieves a single reference report and compares it against one AI-generated report. To accommodate different generation paradigms, we employ a dynamic report extraction strategy: for PRISM and CorePath that generate multiple candidate reports (delimited by \texttt{|||}), one report is randomly sampled for evaluation; for the CorePath-CRG model that produces a single deterministic output, the entire text is evaluated directly.

\paragraph{Honesty Priority and Evaluation Dimensions.}
A core principle of this evaluation is the \textbf{Honesty Priority}, which strictly enforces the clinical safety hierarchy: \textit{Correct Information > Unknown/Omitted > Incorrect/Hallucinated Information}. Fabricating medical facts or making critical diagnostic errors (e.g., confusing benign and malignant lesions) is heavily penalized, whereas omitting uncertain information is strongly preferred over guessing. The LLM auditor evaluates the report across four weighted dimensions:
\begin{enumerate}
    \item \textbf{Clinical Factuality \& Honesty (40\%):} Accuracy of medical entities and strict adherence to the honesty priority.
    \item \textbf{Clinical Completeness (30\%):} Coverage of essential diagnostic information present in the ground truth, including malignancy status, histological subtype, grade when available, and relevant morphologic findings.
    \item \textbf{Logical Consistency (20\%):} Internal coherence, ensuring the microscopic description aligns with the final diagnosis.
    \item \textbf{Professionalism \& Fluency (10\%):} Use of standard breast pathology terminology and coherent medical prose.
    
\end{enumerate}

\paragraph{LLM Auditor and Prompt Design.}
We utilize the DeepSeek API with a temperature of 0.0 and an increased maximum token limit (1024 tokens) to allow for more detailed analytical reasoning. The exact prompts, including the scoring guidance and dimension weights, are detailed in Figure~\ref{fig:eval_prompt}.

\begin{figure}[htbp]
\centering
\begin{tcolorbox}[
    colback=gray!5,
    colframe=gray!60,
    boxrule=0.5pt,
    arc=0pt,
    left=6pt, right=6pt, top=6pt, bottom=6pt,
    title=\textbf{Prompt Template for Evaluation Score},
    fonttitle=\normalsize\bfseries
]
\begin{lstlisting}[
    breaklines=true,
    basicstyle=\small\ttfamily,
    columns=fullflexible,
    frame=none,
    backgroundcolor=\color{gray!5} % 与外部 tcolorbox 背景完美融合
]
### SYSTEM PROMPT###
You are an expert pathology report quality auditor.
Given the Ground Truth (GT) reference report and an AI-generated pathology report,
evaluate the AI report on a continuous scale from 0.00 to 1.00.

[Core Principle: Honesty & Factuality]
Priority: Correct Information > Unknown/Omitted > Incorrect/Hallucinated Information.
Fabricating medical facts, hallucinating non-existent lesions, or making critical diagnostic errors (e.g., benign vs. malignant) must be heavily penalized. Omitting uncertain information is strictly preferred over guessing incorrectly.

[Evaluation Dimensions & Weights]
1. Clinical Factuality & Honesty (40%): Accuracy of stated diagnostic information. Strictly apply the honesty priority.
2. Diagnostic Coverage (30%): Coverage of essential diagnostic information present in the ground truth, including malignancy status, histological subtype, grade when available, and relevant morphologic findings. Do not penalize omission of information that is not present in the ground truth or not assessable on core needle biopsy.
3. Logical Consistency (20%): Internal coherence (e.g., microscopic description aligns with final diagnosis).
4. Professionalism & Fluency (10%): Standard breast pathology terminology and coherent medical prose.

[Scoring Guidance]
- Near 1.00: Accurate, honest, complete, and professional.
- 0.60-0.89: Minor omissions or slight terminology flaws, but NO critical factual errors.
- 0.20-0.59: Minor factual contradictions. 
- Near 0.00: Severe hallucinations, critical diagnostic errors, or nonsensical.

Output format (use these exact tags, nothing else outside them):
<analysis>
Factuality & Honesty: [brief note]
Completeness: [brief note]
Consistency: [brief note]
Professionalism: [brief note]
</analysis>
EVAL_Score: X.XX

### USER PROMPT ###
GT REFERENCE REPORT:
{gt_text}

AI GENERATED REPORT:
{ai_report}
\end{lstlisting}
\end{tcolorbox}
\caption{System and User prompts used for calculating the Evaluation Score, emphasizing the Honesty Priority and weighted clinical dimensions.}
\label{fig:eval_prompt}
\end{figure}

\paragraph{Response Parsing and Clinical Significance.}
The response parsing follows the established regex-based extraction protocol to isolate the \texttt{<analysis>} block and the final \texttt{EVAL\_Score}. By strictly enforcing the honesty priority through the prompt design, this metric provides a highly reliable proxy for clinical safety that can be compared with expert assessment. Crucially, it ensures that models like the Risk-Control (CorePath-CRG) variant are appropriately rewarded for their factual reliability, rather than being penalized by traditional lexical overlap metrics (e.g., BLEU/ROUGE) for missing uncertain information.

\subsection{Pathologist Validation of the LLM-Based Evaluation Score}
\label{subsec:human_validation_details}

To assess the alignment between the Evaluation Score assigned by the DeepSeek API and pathologist judgment, we conducted a blinded and repeated pathologist review using a subset of generated reports selected through score stratification. Twelve cases were selected from each of the four external centers (SWH, WCH-2, WTH, and SJH), yielding a total of 48 cases. Within each center, cases were sampled from four predefined intervals of the CorePath-CRG Evaluation Score, with three cases selected from each interval. For each selected case, the reports generated by PRISM, CorePath, and CorePath-CRG were all included in the evaluation. Consequently, the same 48 cases were evaluated across all three models, resulting in 144 reports for evaluation.

All reports were anonymized and presented in a fully randomized order. Information regarding the generating model, originating center, and associations among reports from the same case was concealed from the reviewer. A breast pathologist independently evaluated all reports in three sessions conducted at intervals of one week. The report order was randomly shuffled, and new anonymous identifiers were assigned before each session. Ratings from previous sessions were not available during subsequent assessments. The pathologist applied the same rubric that prioritized diagnostic honesty and used the same scoring range from 0 to 1 as the DeepSeek API evaluator. The mean score across the three independent review sessions was used as the pathologist reference score.

\subsection{Definitions of Automated Text-Similarity Metrics}
\label{app:metrics}
We provide the formal definitions of the automatic metrics used in our quantitative evaluation.

\paragraph{BLEU.}
The BLEU score is computed as:
\begin{equation}
    \text{BLEU} = \text{BP} \cdot \exp\left( \sum_{n=1}^{N} w_n \log p_n \right),
\end{equation}
where $p_n$ denotes the modified $n$-gram precision for order $n$, $w_n$ is the weight for each $n$-gram order (typically $w_n = 1/N$), and $N$ is the maximum $n$-gram order (commonly $N=4$). The brevity penalty $\text{BP}$ is defined as:
\begin{equation}
    \text{BP} = 
    \begin{cases}
        1 & \text{if } c > r, \\
        \exp(1 - r/c) & \text{if } c \leq r,
    \end{cases}
\end{equation}
where $c$ is the length of the candidate generation and $r$ is the effective reference length.

\paragraph{ROUGE.}
ROUGE-$N$ measures the recall of $n$-gram overlap between the generated text and reference:
\begin{equation}
    \text{ROUGE-}N = \frac{\sum_{S \in \mathcal{R}} \sum_{\text{gram}_n \in S} \text{Count}_{\text{match}}(\text{gram}_n)}{\sum_{S \in \mathcal{R}} \sum_{\text{gram}_n \in S} \text{Count}(\text{gram}_n)},
\end{equation}
where $\mathcal{R}$ denotes the set of reference reports, $\text{Count}(\text{gram}_n)$ is the total occurrence of an $n$-gram in the references, and $\text{Count}_{\text{match}}(\text{gram}_n)$ is the maximum number of matching $n$-grams between the candidate and references. ROUGE-L, based on the longest common subsequence (LCS), is also commonly reported to capture sentence-level structural similarity.

\paragraph{METEOR.}
METEOR computes a harmonic mean of unigram precision and recall, adjusted by a fragmentation penalty:
\begin{equation}
    \text{METEOR} = F_{\text{mean}} \cdot (1 - \text{Pen}),
\end{equation}
where
\begin{equation}
    F_{\text{mean}} = \frac{P \cdot R}{\alpha P + (1 - \alpha) R},
\end{equation}
with $P$ and $R$ denoting unigram precision and recall, respectively, and $\alpha$ is a weighting parameter (typically $\alpha = 0.9$). The fragmentation penalty $\text{Pen}$ is defined as:
\begin{equation}
    \text{Pen} = \gamma \cdot \left( \frac{\text{ch}}{m} \right)^\theta,
\end{equation}
where $\text{ch}$ is the number of chunks (contiguous matched sequences), $m$ is the total number of matched unigrams, and $\gamma, \theta$ are penalty parameters. METEOR additionally incorporates synonym matching and stemming to capture semantic equivalence beyond exact string matches.

\subsection{Code and Data Availability}
The private institutional WSIs and diagnostic reports used for model adaptation and validation are not publicly available due to patient privacy, institutional data governance, and ethical restrictions. Public benchmark datasets analyzed in this study are publicly available from their original repositories, including BCNB (https://bcnb.grand-challenge.org/) and BRACS (https://www.bracs.icar.cnr.it/download/). The pretrained weights of the baseline models are publicly available from Hugging Face, including PRISM (https://huggingface.co/paige-ai/Prism) and TITAN (https://huggingface.co/MahmoodLab/TITAN). The preprocessing, training, evaluation, statistical analysis, and risk-control code, together with the trained CorePath model weights, will be made publicly available (https://github.com/danninglee/CorePath) upon acceptance of the manuscript to support reproducibility. Qualified academic researchers with approved access to PRISM may reproduce and use the adaptation pipeline under the applicable license terms.

\newpage
\section{Supplementary Results}

\subsection{Supplementary Figures}
\renewcommand{\thefigure}{S\arabic{figure}}

\begin{figure}[htbp]
\centering

\begin{subfigure}[t]{0.49\textwidth}
    \centering
    \includegraphics[width=\linewidth]{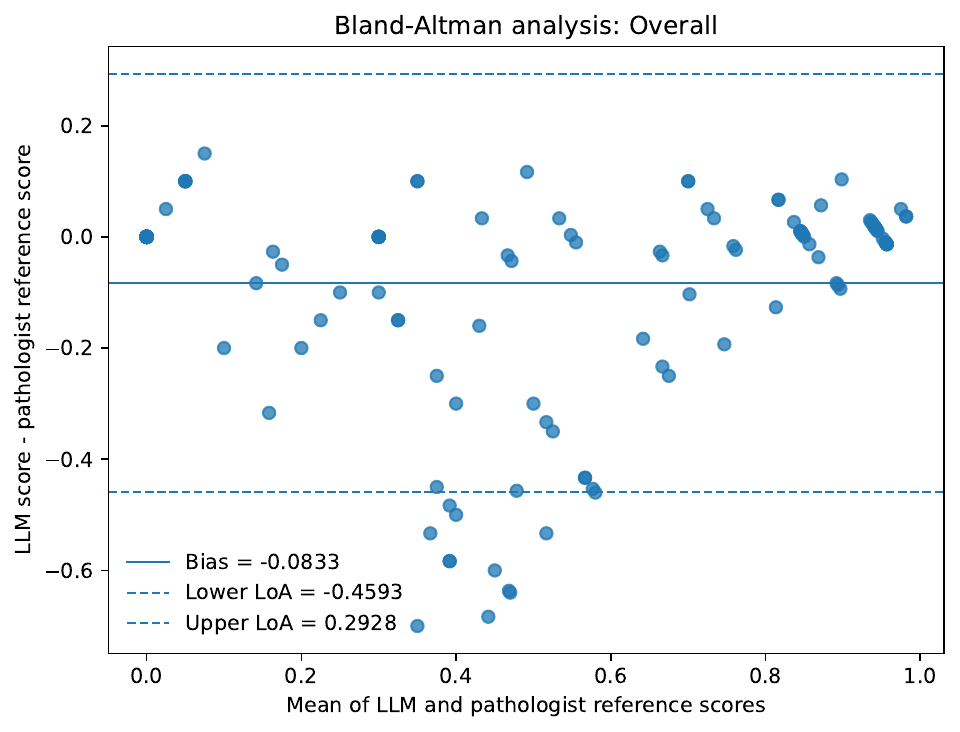}
    \caption{}
    \label{fig:bland_altman_overall}
\end{subfigure}
\hfill
\begin{subfigure}[t]{0.49\textwidth}
    \centering
    \includegraphics[width=\linewidth]{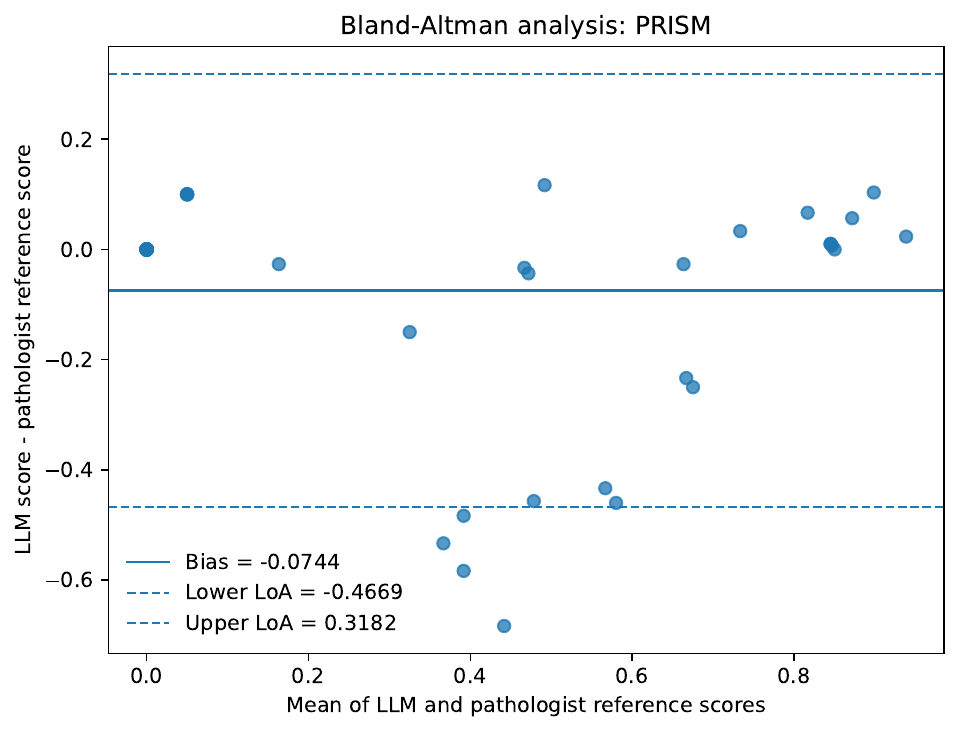}
    \caption{}
    \label{fig:bland_altman_prism}
\end{subfigure}

\vspace{0.5em}

\begin{subfigure}[t]{0.49\textwidth}
    \centering
    \includegraphics[width=\linewidth]{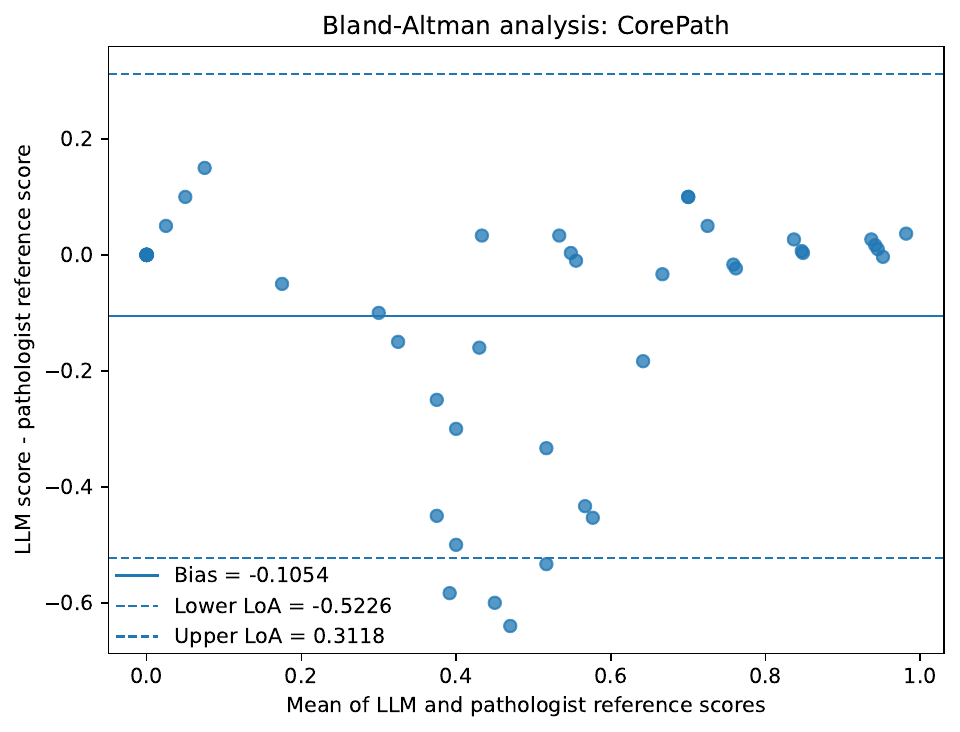}
    \caption{}
    \label{fig:bland_altman_corepath}
\end{subfigure}
\hfill
\begin{subfigure}[t]{0.49\textwidth}
    \centering
    \includegraphics[width=\linewidth]{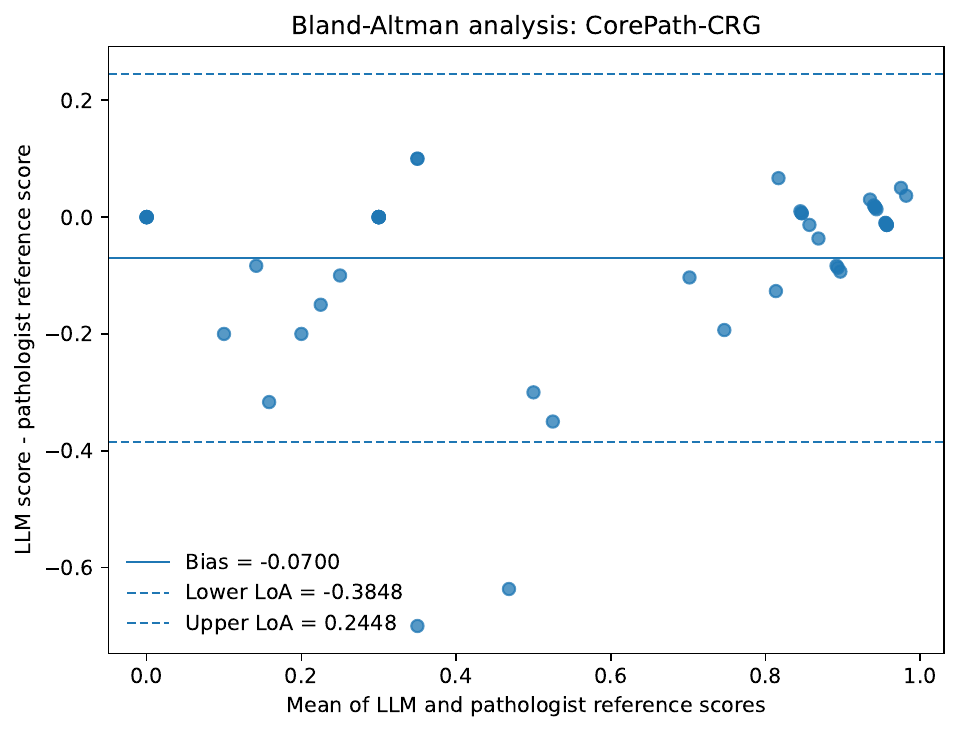}
    \caption{}
    \label{fig:bland_altman_corepath_crg}
\end{subfigure}

\caption{
\textbf{Bland-Altman analysis of agreement between LLM-based Evaluation Scores and pathologist reference scores.}
Panels show the results for (a) all reports ($N=144$), (b) reports generated by PRISM ($N=48$), (c) reports generated by CorePath ($N=48$), and (d) reports generated by CorePath-CRG ($N=48$). For each report, the pathologist reference score was defined as the mean of three repeated assessments. The horizontal axis represents the mean of the LLM-based Evaluation Score and the pathologist reference score, whereas the vertical axis represents their difference, calculated as the LLM-based Evaluation Score minus the pathologist reference score. The solid horizontal line indicates the mean difference (bias), and the upper and lower dashed horizontal lines indicate the upper and lower 95\% limits of agreement, respectively, calculated as the bias $\pm 1.96$ times the standard deviation of the paired differences. The bias and 95\% limits of agreement are annotated in each panel. LLM, large language model; LoA, limits of agreement; $N$, the number of reports. The results show that the LLM-based Evaluation Scores align with pathologist reference scores but are generally more conservative.
}
\label{fig:bland_altman_agreement}

\end{figure}

\begin{figure}[htbp]
    \centering
    \includegraphics[width=.9\linewidth]{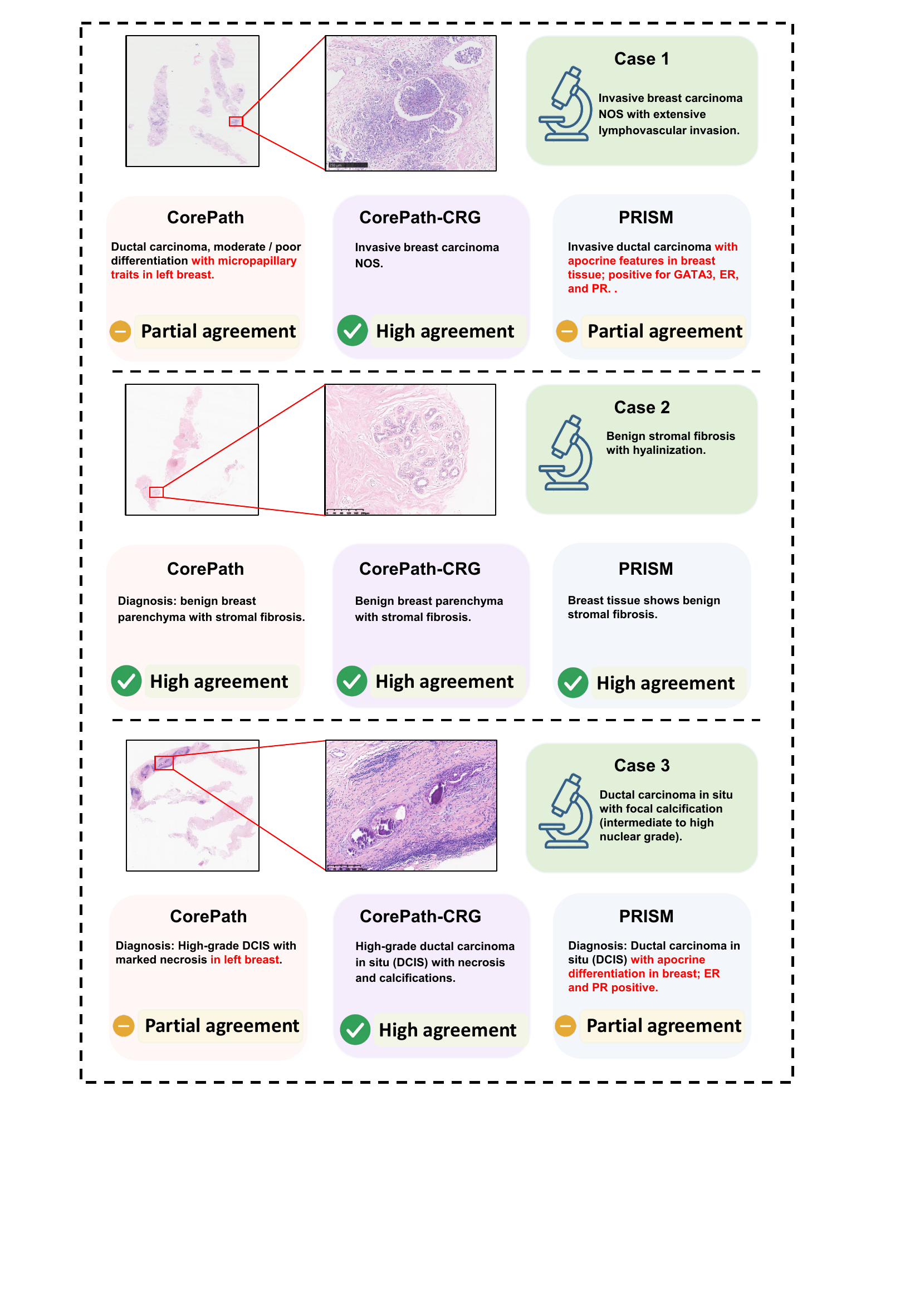}
    \caption{\textbf{Supplementary report-generation examples across different models.}
For each case, the left panels show the whole-slide image and a magnified region of interest, and the green panel summarizes the reference diagnosis. Model-generated outputs are shown below each case. Red text indicates an incorrect or unsupported diagnosis. Scale bars: from top to bottom, 250$\mu$m, 200$\mu$m, and 200$\mu$m.}
    \label{fig:report_examples_supp}
\end{figure}

\newpage
\subsection{Supplementary Tables}
% ── Table S1: SPH-2 breast biopsy dataset ──────────────────────────
\begin{table}[htbp]
\centering
\caption{Information on different tasks about the SPH-2 breast biopsy dataset.}
\label{tab:diagnostic_task_summary_SPH-2}
\renewcommand{\arraystretch}{1.2}
\setlength{\tabcolsep}{8pt}
\small
\begin{tabular}{l c l c}
\toprule
Task  & Categories & Slides per class \\
\midrule
Cancer detection
& Cancer/noncancer
& 367/245 \\
Tumor category classification
& Noncancer/in situ/invasive
& 245/35/332 \\
Five-class histological subtype
& Noncancer/ILC/IDC/other IBC/in situ
& 245/20/282/30/35 \\
\midrule
Total
& 612
& 
&  \\
\bottomrule
\end{tabular}

\vspace{0.5em}
\begin{minipage}{0.95\linewidth}
\footnotesize
\textit{Note.} In situ, carcinoma in situ; invasive, invasive breast carcinoma; ILC, invasive lobular carcinoma; IDC, invasive ductal carcinoma; other IBC, other invasive breast carcinoma.
\end{minipage}
\end{table}
 
\vspace{1em}
 
% ── Table S2: SWH breast biopsy dataset ──────────────────────────
\begin{table}[htbp]
\centering
\caption{Information on different tasks about the SWH breast biopsy dataset.}
\label{tab:diagnostic_task_summary_SWH}
\renewcommand{\arraystretch}{1.2}
\setlength{\tabcolsep}{8pt}
\small
\begin{tabular}{l c l c}
\toprule
Task  & Categories & Slides per class \\
\midrule
Cancer detection
& Cancer/noncancer
& 585/807 \\
Tumor category classification
& Noncancer/in situ/invasive
& 807/44/541 \\
Five-class histological subtype
& Noncancer/ILC/IDC/other IBC/in situ
& 807/8/508/25/44 \\
\midrule
Total
& 1392
& 
&  \\
\bottomrule
\end{tabular}

\vspace{0.5em}
\begin{minipage}{0.95\linewidth}
\footnotesize
\textit{Note.} In situ, carcinoma in situ; invasive, invasive breast carcinoma; ILC, invasive lobular carcinoma; IDC, invasive ductal carcinoma; other IBC, other invasive breast carcinoma.
\end{minipage}

\end{table}
 
\vspace{1em}
 
% ── Table S1: WCH-2 breast biopsy dataset ──────────────────────────
\begin{table}[htbp]
\centering
\caption{Information on different tasks about the WCH-2 breast biopsy dataset.}
\label{tab:diagnostic_task_summary_WCH-2}
\renewcommand{\arraystretch}{1.2}
\setlength{\tabcolsep}{8pt}
\small
\begin{tabular}{l c l c}
\toprule
Task  & Categories & Slides per class \\
\midrule
Cancer detection
& Cancer/noncancer
& 336/160 \\
Tumor category classification
& Noncancer/in situ/invasive
& 160/48/288 \\
Five-class histological subtype
& Noncancer/ILC/IDC/other IBC/in situ
& 160/11/270/7/48 \\
\midrule
Total
& 496
& 
&  \\
\bottomrule
\end{tabular}

\vspace{0.5em}
\begin{minipage}{0.95\linewidth}
\footnotesize
\textit{Note.} In situ, carcinoma in situ; invasive, invasive breast carcinoma; ILC, invasive lobular carcinoma; IDC, invasive ductal carcinoma; other IBC, other invasive breast carcinoma.
\end{minipage}

\end{table}
 
\vspace{1em}
  
% ── Table S4: WTH breast biopsy dataset ──────────────────────────
\begin{table}[htbp]
\centering
\caption{Information on different tasks about the WTH breast biopsy dataset.}
\label{tab:diagnostic_task_summary_WTH}
\renewcommand{\arraystretch}{1.2}
\setlength{\tabcolsep}{8pt}
\small
\begin{tabular}{l c l c}
\toprule
Task  & Categories & Slides per class \\
\midrule
Cancer detection
& Cancer/noncancer
& 207/75 \\
Tumor category classification
& Noncancer/in situ/invasive
& 75/17/190 \\
Five-class histological subtype
& Noncancer/ILC/IDC/other IBC/in situ
& 75/4/185/1/17 \\
\midrule
Total
& 282
& 
&  \\
\bottomrule
\end{tabular}

\vspace{0.5em}
\begin{minipage}{0.95\linewidth}
\footnotesize
\textit{Note.} In situ, carcinoma in situ; invasive, invasive breast carcinoma; ILC, invasive lobular carcinoma; IDC, invasive ductal carcinoma; other IBC, other invasive breast carcinoma.
\end{minipage}

\end{table}
 
\vspace{1em}

% ── Table S5: SJH breast biopsy dataset ──────────────────────────
\begin{table}[htbp]
\centering
\caption{Information on different tasks about the SJH breast biopsy dataset.}
\label{tab:diagnostic_task_summary_SJH}
\renewcommand{\arraystretch}{1.2}
\setlength{\tabcolsep}{8pt}
\small
\begin{tabular}{l c l c}
\toprule
Task  & Categories & Slides per class \\
\midrule
Cancer detection
& Cancer/noncancer
& 76/52 \\
Tumor category classification
& Noncancer/in situ/invasive
& 52/6/70 \\
Five-class histological subtype
& Noncancer/ILC/IDC/other IBC/in situ
& 52/2/67/1/6 \\
\midrule
Total
& 128
& 
&  \\
\bottomrule
\end{tabular}

\vspace{0.5em}
\begin{minipage}{0.95\linewidth}
\footnotesize
\textit{Note.} In situ, carcinoma in situ; invasive, invasive breast carcinoma; ILC, invasive lobular carcinoma; IDC, invasive ductal carcinoma; other IBC, other invasive breast carcinoma.
\end{minipage}

\end{table}
 
\vspace{1em}

% ── Table S6: SZH breast biopsy dataset ──────────────────────────
\begin{table}[htbp]
\centering
\caption{Information on different tasks about the SZH breast biopsy dataset.}
\label{tab:diagnostic_task_summary_SZH}
\renewcommand{\arraystretch}{1.2}
\setlength{\tabcolsep}{8pt}
\small
\begin{tabular}{l c l c}
\toprule
Task  & Categories & Slides per class \\
\midrule
Cancer detection
& Cancer/noncancer
& 75/60 \\
Tumor category classification
& Noncancer/in situ/invasive
& 60/3/72 \\
\midrule
Total
& 135
& 
&  \\
\bottomrule
\end{tabular}

\vspace{0.5em}
\begin{minipage}{0.95\linewidth}
\footnotesize
\textit{Note.} In situ, carcinoma in situ; invasive, invasive breast carcinoma.
\end{minipage}

\end{table}
\FloatBarrier
% Table~\ref{tab:zeroshot_binary} to \ref{tab:zeroshot_open} show numerical results for zero-shot classifications.

% ── Table S5: PROMPT for zero-shot 2class ───────────────────────
\begin{table}[ht]
\centering
\caption{Prompt class names used for cancer detection.}
\label{tab:binary_prompt_class_names}
\renewcommand{\arraystretch}{1.15}
\begin{tabular}{l c l}
\toprule
Task & Class & Class names \\
\midrule
\multirow{4}{*}{Cancer detection}
& \multirow{2}{*}{Noncancer}
& noncancer \\
& & benign \\
\cmidrule(lr){2-3}
& \multirow{2}{*}{Cancer}
& cancer \\
& & carcinoma \\
\bottomrule
\end{tabular}
\end{table}

\vspace{1em}

% ── Table S6: PROMPT for zero-shot 3class ───────────────────────
\begin{table}[ht]
\centering
\caption{Prompt class names used for invasion assessment.}
\label{tab:three_class_prompt_class_names}
\renewcommand{\arraystretch}{1.12}
\begin{tabular}{l l l}
\toprule
Task & Class & Class names \\
\midrule
\multirow{25}{2.5cm}{Invasion assessment}
& \multirow{2}{*}{Noncancer}
& noncancer \\
& & benign \\
\cmidrule(lr){2-3}
& \multirow{21}{2cm}{Invasive breast carcinoma}
& invasive lobular carcinoma \\
& & breast invasive lobular carcinoma \\
& & invasive lobular carcinoma of the breast \\
& & invasive carcinoma of the breast, lobular pattern \\
& & breast ILC \\
& & invasive ductal carcinoma \\
& & breast invasive ductal carcinoma \\
& & invasive ductal carcinoma of the breast \\
& & invasive carcinoma of the breast, ductal pattern \\
& & breast IDC \\
& & tubular carcinoma \\
& & cribriform carcinoma \\
& & mucinous carcinoma \\
& & mucinous cystadenocarcinoma \\
& & invasive micropapillary carcinoma \\
& & carcinoma with apocrine differentiation \\
& & metaplastic carcinoma \\
& & rare and salivary gland-type tumours \\
& & neuroendocrine neoplasms \\
& & neuroendocrine tumour \\
& & neuroendocrine carcinoma \\
\cmidrule(lr){2-3}
& \multirow{2}{2cm}{In situ carcinoma}
& lobular carcinoma in situ \\
& & ductal carcinoma in situ \\
\bottomrule
\end{tabular}
\end{table}

\vspace{1em}

% ── Table S7: PROMPT for zero-shot 5class ───────────────────────
\begin{table}[ht]
\centering
\small
\caption{Prompt class names used for histological subtyping.}
\label{tab:five_class_prompt_class_names}
\renewcommand{\arraystretch}{1.12}
\resizebox{\columnwidth}{!}{% 
\begin{tabular}{l l l}
\toprule
Task & Class & Class names \\
\midrule
\multirow{25}{2.5cm}{Histological subtyping}
& \multirow{2}{2cm}{Noncancer}
& noncancer \\
& & benign \\
\cmidrule(lr){2-3}
& \multirow{2}{2cm}{In situ carcinoma}
& lobular carcinoma in situ \\
& & ductal carcinoma in situ \\
\cmidrule(lr){2-3}
& \multirow{5}{2cm}{Invasive breast carcinoma NOS}
& invasive ductal carcinoma \\
& & breast invasive ductal carcinoma \\
& & invasive ductal carcinoma of the breast \\
& & invasive carcinoma of the breast, ductal pattern \\
& & breast IDC \\
\cmidrule(lr){2-3}
& \multirow{5}{2cm}{Invasive lobular carcinoma}
& invasive lobular carcinoma \\
& & breast invasive lobular carcinoma \\
& & invasive lobular carcinoma of the breast \\
& & invasive carcinoma of the breast, lobular pattern \\
& & breast ILC \\
\cmidrule(lr){2-3}
& \multirow{11}{2cm}{Other invasive breast carcinoma}
& tubular carcinoma \\
& & cribriform carcinoma \\
& & mucinous carcinoma \\
& & mucinous cystadenocarcinoma \\
& & invasive micropapillary carcinoma \\
& & carcinoma with apocrine differentiation \\
& & metaplastic carcinoma \\
& & rare and salivary gland-type tumours \\
& & neuroendocrine neoplasms \\
& & neuroendocrine tumour \\
& & neuroendocrine carcinoma \\
\bottomrule
\end{tabular}
}
\end{table}

\vspace{1em}

% ── Table S8: PROMPT for zero-shot 3class BCNB ───────────────────────
\begin{table}[ht]
\centering
\small
\caption{Prompt class names used for invasive carcinoma subtype categorization of BCNB.}
\label{tab:invasive_three_class_prompt_class_names}
\renewcommand{\arraystretch}{1.12}
\begin{tabular}{l l l}
\toprule
Task & Class & Class names \\
\midrule
\multirow{21}{2.5cm}{Invasive carcinoma subtype categorization}
& \multirow{5}{2cm}{Invasive breast carcinoma NOS}
& invasive ductal carcinoma \\
& & breast invasive ductal carcinoma \\
& & invasive ductal carcinoma of the breast \\
& & invasive carcinoma of the breast, ductal pattern \\
& & breast IDC \\
\cmidrule(lr){2-3}
& \multirow{5}{2cm}{Invasive lobular carcinoma}
& invasive lobular carcinoma \\
& & breast invasive lobular carcinoma \\
& & invasive lobular carcinoma of the breast \\
& & invasive carcinoma of the breast, lobular pattern \\
& & breast ILC \\
\cmidrule(lr){2-3}
& \multirow{11}{2cm}{Other invasive breast carcinoma}
& tubular carcinoma \\
& & cribriform carcinoma \\
& & mucinous carcinoma \\
& & mucinous cystadenocarcinoma \\
& & invasive micropapillary carcinoma \\
& & carcinoma with apocrine differentiation \\
& & metaplastic carcinoma \\
& & rare and salivary gland-type tumours \\
& & neuroendocrine neoplasms \\
& & neuroendocrine tumour \\
& & neuroendocrine carcinoma \\
\bottomrule
\end{tabular}
\end{table}

\vspace{1em}

% ── Table S9: PROMPT for zero-shot 3class BRACS ───────────────────────
\begin{table}[ht]
\centering
\small
\caption{Prompt class names used for lesion stratification of BRACS.}
\label{tab:bracs_three_class_prompt_class_names}
\renewcommand{\arraystretch}{1.12}
\begin{tabular}{l l l}
\toprule
Task & Class & Class names \\
\midrule
\multirow{13}{3.5cm}{Lesion stratification}
& \multirow{4}{*}{Benign tumors}
& benign breast tumor \\
& & benign breast lesion \\
& & benign breast neoplasm \\
& & non-malignant breast tumor \\
\cmidrule(lr){2-3}
& \multirow{4}{*}{Atypical tumors}
& atypical breast lesion \\
& & breast atypia \\
& & atypical proliferative lesion of breast \\
& & premalignant breast lesion \\
\cmidrule(lr){2-3}
& \multirow{5}{*}{Malignant tumors}
& malignant breast tumor \\
& & breast cancer \\
& & breast carcinoma \\
& & breast malignancy \\
& & malignant breast neoplasm \\
\bottomrule
\end{tabular}
\end{table}

\vspace{1em}

% ── Table S10: PROMPT for zero-shot 7class BRACS ───────────────────────
\begin{table}[ht]
\centering
%\scriptsize
\caption{Prompt class names used for fine-grained classification of BRACS.}
\label{tab:bracs_seven_class_prompt_class_names}
\renewcommand{\arraystretch}{1.08}
\begin{tabular}{p{3.5cm} p{3.4cm} @{\hspace{0.65cm}} p{5cm}}
\toprule
Task & Class & Class names \\
\midrule
\multirow{36}{3.5cm}{Fine-grained classification}
& \multirow{4}{*}{Normal}
& normal breast tissue \\
& & normal mammary tissue \\
& & healthy breast tissue \\
& & non-lesional breast tissue \\
\cmidrule(lr){2-3}
& \multirow{4}{*}{Pathological benign}
& benign breast lesion \\
& & benign breast pathology \\
& & benign breast disease \\
& & non-malignant breast lesion \\
\cmidrule(lr){2-3}
& \multirow{5}{*}{Usual ductal hyperplasia}
& usual ductal hyperplasia \\
& & benign ductal hyperplasia \\
& & usual epithelial hyperplasia \\
& & proliferative lesion without atypia \\
& & UDH \\
\cmidrule(lr){2-3}
& \multirow{5}{*}{Flat epithelial atypia}
& flat epithelial atypia \\
& & flat epithelial lesion with atypia \\
& & columnar cell lesion with atypia \\
& & atypical columnar cell change \\
& & FEA \\
\cmidrule(lr){2-3}
& \multirow{6}{*}{Atypical ductal hyperplasia}
& atypical ductal hyperplasia \\
& & ductal hyperplasia with atypia \\
& & atypical intraductal proliferation \\
& & atypical hyperplasia of the breast \\
& & atypical ductal proliferation \\
& & ADH \\
\cmidrule(lr){2-3}
& \multirow{6}{*}{Ductal carcinoma in situ}
& ductal carcinoma in situ \\
& & in situ ductal carcinoma \\
& & intraductal carcinoma \\
& & non-invasive ductal carcinoma \\
& & ductal carcinoma (in situ) \\
& & DCIS \\
\cmidrule(lr){2-3}
& \multirow{6}{*}{Invasive carcinoma}
& invasive breast carcinoma \\
& & invasive breast carcinoma \\
& & invasive breast cancer \\
& & infiltrating breast carcinoma \\
& & invasive malignant breast tumor \\
& & carcinoma of the breast, invasive \\
\bottomrule
\end{tabular}
\end{table}

\FloatBarrier

% ── Table 1: Binary Classification ──────────────────────────
\begin{table}[ht]
\centering
\caption{Zero-shot cancer detection performance on independent private datasets. Values are original test-set point estimates with 95\% bootstrap confidence intervals in parentheses. Bold indicates the best result per dataset per metric.}
\label{tab:zeroshot_binary}
\begin{tabular}{llccc}
\toprule
Dataset & Model & Accuracy & Weighted F1 & Weighted AUC \\
\midrule
\multirow{3}{*}{SPH-2}
    & PRISM              & 0.8758 (0.8480-0.9020)          & 0.8717 (0.8426-0.8989)          & 0.9572 (0.9384-0.9737)          \\
    & TITAN              & 0.9183 (0.8954-0.9395)        & 0.9173 (0.8940-0.9390)        & 0.9669 (0.9495-0.9810)          \\
    & CorePath    & \best{0.9330} (0.9134-0.9526)        & \best{0.9322} (0.9119-0.9521)        & \best{0.9783} (0.9653-0.9891)   \\
\midrule
\multirow{3}{*}{SWH}
    & PRISM              & 0.8556 (0.8369-0.8736)        & 0.8561 (0.8374-0.8741)        & 0.9756 (0.9682-0.9827)          \\
    & TITAN              & 0.9016 (0.8865-0.9167)        & 0.9022 (0.8870-0.9170)        & 0.9886 (0.9839-0.9929)          \\
    & CorePath    & \best{0.9555} (0.9447-0.9655)        & \best{0.9557} (0.9450-0.9657)        & \best{0.9948} (0.9909-0.9977)   \\
\midrule
\multirow{3}{*}{WCH-2}
    & PRISM              & 0.9133 (0.8851-0.9375)        & 0.9112 (0.8817-0.9362)        & 0.9701 (0.9538-0.9830)          \\
    & TITAN              & 0.9254 (0.9012-0.9476)        & 0.9247 (0.8993-0.9471)        & 0.9734 (0.9594-0.9838)          \\
    & CorePath    & \best{0.9536} (0.9335-0.9718)        & \best{0.9532} (0.9327-0.9716)        & \best{0.9912} (0.9825-0.9971)   \\
\midrule
\multirow{3}{*}{WTH}
    & PRISM              & 0.9291 (0.9007-0.9574)        & 0.9262 (0.8942-0.9564)        & 0.9665 (0.9408-0.9874)          \\
    & TITAN              & 0.9326 (0.9007-0.9610)        & 0.9301 (0.8956-0.9597)        & 0.9858 (0.9714-0.9954)          \\
    & CorePath    & \best{0.9539} (0.9291-0.9752)        & \best{0.9527} (0.9260-0.9749)        & \best{0.9907} (0.9821-0.9968)   \\
\midrule
\multirow{3}{*}{SJH}
    & PRISM              & 0.8594 (0.7969-0.9141)        & 0.8570 (0.7905-0.9130)        & 0.9370 (0.8840-0.9762)          \\
    & TITAN              & \best{0.9375} (0.8982-0.9766)        & \best{0.9371} (0.8956-0.9765)        & 0.9603 (0.9167-0.9965)          \\
    & CorePath    & 0.9297 (0.8826-0.9688)        & 0.9293 (0.8789-0.9687)        & \best{0.9669} (0.9271-0.9931)   \\
\midrule
\multirow{3}{*}{SZH}
    & PRISM              & 0.8593 (0.7926-0.9111)        & 0.8539 (0.7850-0.9091)        & 0.9942 (0.9843-0.9998)          \\
    & TITAN              & \best{0.9630} (0.9259-0.9926)        & \best{0.9629} (0.9254-0.9926)        & 0.9891 (0.9668-1.0000)   \\
    & CorePath    & 0.9556 (0.9111-0.9852)        & 0.9552 (0.9115-0.9852)        & \best{0.9989} (0.9956-1.0000)          \\
\bottomrule
\end{tabular}
\end{table}
 
\vspace{1em}
 
% ── Table 2: Three-Class Classification ─────────────────────
\begin{table}[ht]
\centering
\caption{Zero-shot invasion assessment performance on independent private datasets. Values are original test-set point estimates with 95\% bootstrap confidence intervals in parentheses. Bold indicates the best result per dataset per metric.}
\label{tab:zeroshot_3class}
\begin{tabular}{llccc}
\toprule
Dataset & Model & Accuracy & Weighted F1 & Weighted AUC \\
\midrule
\multirow{3}{*}{SPH-2}
    & PRISM              & 0.7010 (0.6634-0.7369)        & 0.6798 (0.6354-0.7192)        & 0.8705 (0.8448-0.8947)          \\
    & TITAN              & 0.5654 (0.5261-0.6046)        & 0.5004 (0.4568-0.5447)        & 0.9635 (0.9481-0.9753)   \\
    & CorePath    & \best{0.8268} (0.7941-0.8578)        & \best{0.8234} (0.7892-0.8552)        & \best{0.9643} (0.9487-0.9775)          \\
\midrule
\multirow{3}{*}{SWH}
    & PRISM              & 0.5999 (0.5754-0.6279)        & 0.6061 (0.5799-0.6365)        & 0.8425 (0.8219-0.8601)          \\
    & TITAN              & 0.3886 (0.3635-0.4145)        & 0.3249 (0.2989-0.3526)        & 0.9745 (0.9668-0.9815)          \\
    & CorePath    & \best{0.8355} (0.8168-0.8542)        & \best{0.8397} (0.8210-0.8582)        & \best{0.9860} (0.9800-0.9908)   \\
\midrule
\multirow{3}{*}{WCH-2}
    & PRISM              & 0.7460 (0.7056-0.7863)        & 0.7371 (0.6948-0.7787)        & 0.8683 (0.8377-0.8954)          \\
    & TITAN              & 0.6089 (0.5645-0.6512)        & 0.5371 (0.4897-0.5838)        & 0.9405 (0.9203-0.9594)          \\
    & CorePath    & \best{0.8710} (0.8407-0.8992)        & \best{0.8660} (0.8330-0.8953)        & \best{0.9793} (0.9680-0.9878)   \\
\midrule
\multirow{3}{*}{WTH}
    & PRISM              & 0.7730 (0.7234-0.8192)        & 0.7576 (0.7014-0.8103)        & 0.8758 (0.8354-0.9116)          \\
    & TITAN              & 0.7021 (0.6454-0.7589)        & 0.6443 (0.5818-0.7082)        & 0.9666 (0.9415-0.9856)          \\
    & CorePath    & \best{0.8972} (0.8617-0.9291)        & \best{0.8924} (0.8521-0.9271)        & \best{0.9840} (0.9703-0.9944)   \\
\midrule
\multirow{3}{*}{SJH}
    & PRISM              & 0.6797 (0.5938-0.7578)        & 0.6671 (0.5748-0.7530)        & 0.7718 (0.6892-0.8482)          \\
    & TITAN              & 0.5703 (0.4844-0.6563)        & 0.4894 (0.3957-0.5859)        & 0.9681 (0.9339-0.9928)          \\
    & CorePath    & \best{0.8125} (0.7500-0.8750)        & \best{0.8067} (0.7347-0.8760)        & \best{0.9804} (0.9561-0.9949)   \\
\midrule
\multirow{3}{*}{SZH}
    & PRISM              & 0.5704 (0.4815-0.6519)        & 0.5419 (0.4434-0.6337)        & 0.6866 (0.5992-0.7658)          \\
    & TITAN              & 0.1259 (0.0741-0.1852)        & 0.1658 (0.0976-0.2431)        & 0.6138 (0.5318-0.6923)          \\
    & CorePath    & \best{0.8074} (0.7333-0.8741)        & \best{0.7908} (0.7078-0.8604)        & \best{0.9881} (0.9740-0.9969)   \\
\bottomrule
\end{tabular}
\end{table}
 
\vspace{1em}
 
% ── Table 3: Five-Class Classification ──────────────────────
\begin{table}[ht]
\centering
\caption{Zero-shot histological subtyping performance on independent private datasets. Values are original test-set point estimates with 95\% bootstrap confidence intervals in parentheses. Bold indicates the best result per dataset per metric.}
\label{tab:zeroshot_5class}
\begin{tabular}{llccc}
\toprule
Dataset & Model & Accuracy & Weighted F1 & Weighted AUC \\
\midrule
\multirow{3}{*}{SPH-2}
    & PRISM              & 0.5343 (0.4918-0.5719)        & 0.5379 (0.4925-0.5797)        & 0.8412 (0.8160-0.8644)          \\
    & TITAN              & 0.4706 (0.4314-0.5114)        & 0.3694 (0.3267-0.4144)        & 0.9312 (0.9152-0.9444)          \\
    & CorePath    & \best{0.7337} (0.6993-0.7680)        & \best{0.7285} (0.6904-0.7644)        & \best{0.9557} (0.9412-0.9695)   \\
\midrule
\multirow{3}{*}{SWH}
    & PRISM              & 0.3384 (0.3147-0.3628)        & 0.4124 (0.3845-0.4405)        & 0.7299 (0.7104-0.7477)          \\
    & TITAN              & 0.3534 (0.3290-0.3779)        & 0.2676 (0.2418-0.2937)        & 0.9573 (0.9480-0.9660)          \\
    & CorePath    & \best{0.7636} (0.7407-0.7852)        & \best{0.7834} (0.7599-0.8030)        & \best{0.9735} (0.9658-0.9801)   \\
\midrule
\multirow{3}{*}{WCH-2}
    & PRISM              & 0.4960 (0.4516-0.5383)        & 0.5379 (0.4894-0.5829)        & 0.7338 (0.6953-0.7663)          \\
    & TITAN              & 0.5202 (0.4738-0.5625)        & 0.4315 (0.3799-0.4776)        & 0.9029 (0.8767-0.9257)          \\
    & CorePath    & \best{0.7621} (0.7218-0.7984)        & \best{0.7405} (0.6959-0.7819)        & \best{0.9526} (0.9361-0.9675)   \\
\midrule
\multirow{3}{*}{WTH}
    & PRISM              & 0.4858 (0.4291-0.5426)        & 0.5834 (0.5217-0.6416)        & 0.7487 (0.6982-0.7906)          \\
    & TITAN              & 0.6525 (0.5957-0.7163)        & 0.5751 (0.5081-0.6482)        & 0.9364 (0.9066-0.9610)          \\
    & CorePath    & \best{0.8511} (0.8085-0.8901)        & \best{0.8531} (0.8046-0.8949)        & \best{0.9704} (0.9511-0.9848)   \\
\midrule
\multirow{3}{*}{SJH}
    & PRISM              & 0.4766 (0.3906-0.5703)        & 0.5763 (0.4845-0.6629)        & 0.7257 (0.6592-0.7892)          \\
    & TITAN              & 0.5078 (0.4219-0.6016)        & 0.3993 (0.3086-0.5020)        & 0.9377 (0.8979-0.9696)          \\
    & CorePath    & \best{0.7656} (0.6875-0.8359)        & \best{0.7545} (0.6720-0.8307)        & \best{0.9613} (0.9154-0.9905)   \\
\bottomrule
\end{tabular}
\end{table}
 
\vspace{1em}
 
% ── Table 4: Open Data Classification ───────────────────────
\begin{table}[ht]
\centering
\small
\setlength{\tabcolsep}{4pt}
\caption{Zero-shot classification performance on public benchmark datasets. Values are original test-set point estimates with 95\% bootstrap confidence intervals in parentheses. Bold indicates the best result per dataset per metric.}
\label{tab:zeroshot_open}
\begin{tabular}{l l l c c c}
\toprule
Dataset & Task & Model & Accuracy & Weighted F1 & Weighted AUC \\
\midrule
\multirow{3}{*}{BCNB} 
& \multirow{3}{*}{\makecell[l]{Invasive carcinoma\\subtyping}}
    & PRISM    & 0.2760 (0.2476-0.3034)        & 0.3562 (0.3227-0.3881)        & 0.5748 (0.5214-0.6304)        \\
&   & TITAN    & 0.7968 (0.7741-0.8214)        & 0.8280 (0.8064-0.8497)        & 0.7666 (0.7141-0.8182)        \\
&   & CorePath & \best{0.8989} (0.8800-0.9159)        & \best{0.8919} (0.8698-0.9114)        & \best{0.7780} (0.7279-0.8257) \\
\midrule
\multirow{3}{*}{BRACS}
& \multirow{3}{*}{Lesion stratification}
    & PRISM    & 0.5868 (0.5448-0.6271)        & 0.6313 (0.5918-0.6684)        & 0.8016 (0.7700-0.8295)        \\
&   & TITAN    & 0.2687 (0.2321-0.3053)        & 0.2164 (0.1788-0.2507)        & 0.7937 (0.7586-0.8237)        \\
&   & CorePath & \best{0.6362} (0.5941-0.6746)        & \best{0.6772} (0.6400-0.7111)        & \best{0.8178} (0.7856-0.8458) \\
\midrule
\multirow{3}{*}{BRACS}
& \multirow{3}{*}{\makecell[l]{Fine-grained\\classification}}
    & PRISM    & 0.4168 (0.3766-0.4570)        & 0.4268 (0.3848-0.4699)        & 0.8106 (0.7844-0.8347)        \\
&   & TITAN    & 0.3821 (0.3400-0.4241)        & 0.3491 (0.3061-0.3919)        & 0.8121 (0.7908-0.8314)        \\
&   & CorePath & \best{0.4442} (0.4022-0.4863)        & \best{0.4602} (0.4192-0.5016)        & \best{0.8252} (0.8011-0.8485) \\
\bottomrule
\end{tabular}
\end{table}

\vspace{1em}

\begin{table}[htbp]
\centering
\caption{Intra-rater reliability of repeated pathologist report-quality assessments.}
\label{tab:human_reliability}
\renewcommand{\arraystretch}{1.15}
\setlength{\tabcolsep}{8pt}
\small
\begin{tabular}{l c c c}
\toprule
\textbf{Reliability measure} & \textbf{$N$ reports} & \textbf{ICC} & \textbf{95\% CI} \\
\midrule
Single assessment, ICC(A,1) & 144 & 0.980 & 0.97--0.99 \\
Mean of three assessments, ICC(A,3) & 144 & 0.993 & 0.99--1.00 \\
\bottomrule
\end{tabular}

\vspace{0.5em}
\begin{minipage}{0.95\linewidth}
\footnotesize
\textit{Note.} ICCs were estimated using a two-way mixed-effects, absolute-agreement model. ICC(A,1) denotes the reliability of a single pathologist assessment, whereas ICC(A,3) denotes the reliability of the mean of three repeated assessments. $N$, number of reports; CI, confidence interval.
\end{minipage}
\end{table}

\begin{table}[htbp]
\centering
\caption{Agreement between LLM-based Evaluation Scores and pathologist reference scores.}
\label{tab:llm_human_agreement}
\renewcommand{\arraystretch}{1.15}
\setlength{\tabcolsep}{4.5pt}
\small
\resizebox{\textwidth}{!}{
\begin{tabular}{l c c c c c c}
\toprule
\textbf{Group} & \textbf{$N$} & \textbf{Spearman $\rho$} & \textbf{ICC(A,1), 95\% CI} & \textbf{MAD} & \textbf{Bias} & \textbf{95\% LoA} \\
\midrule
Overall & 144 & 0.911 & 0.843 (0.74--0.90) & 0.112 & $-0.083$ & $-0.459$ to 0.293 \\
PRISM & 48 & 0.918 & 0.828 (0.70--0.90) & 0.109 & $-0.074$ & $-0.467$ to 0.318 \\
CorePath & 48 & 0.850 & 0.778 (0.58--0.88) & 0.137 & $-0.105$ & $-0.523$ to 0.312 \\
CorePath-CRG & 48 & 0.907 & 0.887 (0.78--0.94) & 0.090 & \textbf{$-0.070$} & $-0.385$ to 0.245 \\
\bottomrule
\end{tabular}
}

\vspace{0.5em}
\begin{minipage}{0.98\linewidth}
\footnotesize
\textit{Note.} The pathologist reference score for each report was defined as the mean of three repeated assessments. ICC(A,1) denotes the two-way mixed-effects, absolute-agreement, single-measure ICC between LLM-based Evaluation Score and the corresponding pathologist reference score. Spearman $\rho$ measures rank association; MAD, mean absolute difference; bias, mean signed difference (LLM-based Evaluation Score minus pathologist reference score); LoA, limits of agreement; $N$, number of reports; CI, confidence interval.
\end{minipage}
\end{table}

\begin{table}[htbp]
\centering
\caption{LLM-based Evaluation Score across different datasets (weighted composite score, 0-1). Mean and median values are reported to reflect both average performance and distributional stability. Best results are in \textbf{bold}.}
\label{tab:llm_eval}
\renewcommand{\arraystretch}{1.25}
\setlength{\tabcolsep}{10pt}
\begin{tabular}{l l c c}
\toprule
\textbf{Dataset} & \textbf{Model} & \textbf{Mean} & \textbf{Median} \\
\midrule
\multirow{3}{*}{SWH} 
  & PRISM  & 0.335 & 0.150 \\
  & CorePath     & 0.373 & 0.300 \\
  & CorePath-CRG  & \textbf{0.532} & \textbf{0.450} \\
\midrule
  \multirow{3}{*}{WCH-2} 
  & PRISM  & 0.250 & 0.100 \\
  & CorePath     & 0.319 & 0.250 \\
  & CorePath-CRG  & \textbf{0.589} & \textbf{0.650} \\
\midrule
\multirow{3}{*}{WTH} 
  & PRISM  & 0.224 & 0.075 \\
  & CorePath     & 0.328 & 0.200 \\
  & CorePath-CRG  & \textbf{0.681} & \textbf{0.950} \\
\midrule
\multirow{3}{*}{SJH} 
  & PRISM  & 0.229 & 0.025 \\
  & CorePath     & 0.277 & 0.125 \\
  & CorePath-CRG  & \textbf{0.614} & \textbf{0.850} \\
\bottomrule
\end{tabular}
\end{table}

\begin{table}[htbp]
\centering
\caption{Quantitative evaluation of PRISM, CorePath, and CorePath-CRG using automatic text-similarity metrics on the \textbf{retained} samples (excluding ``Unknown'' rejections) across four datasets. $N$ denotes the number of valid samples after exclusion. Best results are in \textbf{bold}.}
\label{tab:hard_metrics_exclude}
\resizebox{\textwidth}{!}{
\renewcommand{\arraystretch}{1.25}
\setlength{\tabcolsep}{4.5pt}
\begin{tabular}{ll cccccccc}
\toprule
\textbf{Dataset} & \textbf{Model} & \textbf{B-1} & \textbf{B-2} & \textbf{B-3} & \textbf{B-4} & \textbf{R-1} & \textbf{R-2} & \textbf{R-L} & \textbf{METEOR} \\
\midrule
\multirow{3}{*}{SWH ($N$=855)} 
  & PRISM & 0.1590 & 0.0469 & 0.0292 & 0.0225 & 0.1368 & 0.0105 & 0.1159 & 0.1497 \\
  & CorePath & \textbf{0.1893} & 0.0521 & 0.0311 & 0.0236 & 0.1873 & 0.0105 & 0.1394 & 0.1657 \\
  & CorePath-CRG & 0.1562 & \textbf{0.0727} & \textbf{0.0551} & \textbf{0.0411} & \textbf{0.3035} & \textbf{0.1235} & \textbf{0.2995} & \textbf{0.2146} \\
\midrule
\multirow{3}{*}{WCH-2 ($N$=379)} 
  & PRISM & 0.1526 & 0.0607 & 0.0384 & 0.0272 & 0.1653 & 0.0338 & 0.1510 & 0.1988 \\
  & CorePath & 0.1716 & 0.0692 & 0.0435 & 0.0297 & 0.2040 & 0.0464 & 0.1829 & 0.2412 \\
  & CorePath-CRG & \textbf{0.2880} & \textbf{0.1166} & \textbf{0.0928} & \textbf{0.0726} & \textbf{0.3967} & \textbf{0.0693} & \textbf{0.3916} & \textbf{0.2579} \\
\midrule
\multirow{3}{*}{WTH ($N$=228)} 
  & PRISM & 0.1644 & 0.0540 & 0.0325 & 0.0247 & 0.1399 & 0.0158 & 0.1334 & 0.2177 \\
  & CorePath & 0.1976 & 0.0658 & 0.0413 & 0.0291 & 0.2153 & 0.0324 & 0.2032 & 0.2778 \\
  & CorePath-CRG & \textbf{0.3550} & \textbf{0.1217} & \textbf{0.0911} & \textbf{0.0780} & \textbf{0.4984} & \textbf{0.0451} & \textbf{0.4971} & \textbf{0.2860} \\
\midrule
\multirow{3}{*}{SJH ($N$=94)} 
  & PRISM & 0.1451 & 0.0533 & 0.0348 & 0.0256 & 0.1158 & 0.0084 & 0.1085 & 0.1792 \\
  & CorePath & 0.1581 & 0.0455 & 0.0283 & 0.0217 & 0.1598 & 0.0132 & 0.1447 & 0.2117 \\
  & CorePath-CRG & \textbf{0.3341} & \textbf{0.1240} & \textbf{0.0958} & \textbf{0.0784} & \textbf{0.4267} & \textbf{0.0621} & \textbf{0.4267} & \textbf{0.2727} \\
\bottomrule
\end{tabular}
}
\end{table}

\begin{table}[htbp]
\centering
\caption{Post-conformal-filtering diagnostic label distribution across four centers. Values are percentages.}
\label{tab:pathology_stats}
\renewcommand{\arraystretch}{1.15}
\setlength{\tabcolsep}{6pt}
\small
\begin{tabular}{l c c c c}
\toprule
\textbf{Label} & \textbf{SWH} & \textbf{WCH-2} & \textbf{WTH} & \textbf{SJH} \\
\midrule
\multicolumn{5}{l}{\textit{Binary Cancer Status}} \\
\quad Cancer & 43.46\% & 66.53\% & 76.60\% & 57.03\% \\
\quad Noncancer & 55.17\% & 33.27\% & 23.40\% & 40.62\% \\
\quad Unknown & 1.36\% & 0.20\% & --- & 2.34\% \\
\midrule
\multicolumn{5}{l}{\textit{Histological Subtype}} \\
\quad Noncancer & 18.39\% & 13.91\% & 8.51\% & 14.84\% \\
\quad In situ carcinoma & 0.36\% & 2.02\% & 1.77\% & --- \\
\quad Invasive breast carcinoma NOS & 42.17\% & 60.08\% & 70.21\% & 58.59\% \\
\quad Invasive lobular carcinoma & 0.07\% & 0.20\% & 0.35\% & --- \\
\quad Other invasive breast carcinoma & 0.43\% & 0.20\% & --- & --- \\
\quad Unknown & 38.58\% & 23.59\% & 19.15\% & 26.56\% \\
\bottomrule
\end{tabular}

\vspace{0.5em}
\begin{minipage}{0.95\linewidth}
\footnotesize
\textit{Note.} The table summarizes the distribution of diagnostic labels after conformal filtering. ``Unknown'' denotes predictions that did not satisfy the conformal confidence criterion and were therefore not retained as confident diagnostic evidence. NOS, not otherwise specified.
\end{minipage}
\end{table}

\begin{table}[htbp]
\centering
\caption{CorePath-CRG final output statistics across four centers. Values are percentages.}
\label{tab:risk_stats}
\renewcommand{\arraystretch}{1.2}
\setlength{\tabcolsep}{8pt}
\small
\begin{tabular}{l c c c c}
\toprule
\textbf{Metric} & \textbf{SWH} & \textbf{WCH-2} & \textbf{WTH} & \textbf{SJH} \\
\midrule
Full Rejection Rate & 38.58\% & 23.59\% & 19.15\% & 26.56\% \\
Narrative Rejection Rate & 81.03\% & 78.43\% & 82.27\% & 83.59\% \\
Full Rejection $\mid$ Narrative Rejected & 47.61\% & 30.08\% & 23.28\% & 31.78\% \\
\bottomrule
\end{tabular}

\vspace{0.5em}
\begin{minipage}{0.95\linewidth}
\footnotesize
\textit{Note.} Full Rejection Rate denotes the proportion of cases in which both the narrative report and subtype prediction were rejected, requiring pathologist review. Narrative Rejection Rate denotes the proportion of cases in which the CorePath-generated narrative report failed the release criterion. Full Rejection $\mid$ Narrative Rejected denotes the conditional proportion of full rejection among cases with rejected narrative reports, indicating how often the subtype prediction was also rejected when the narrative failed.
\end{minipage}
\end{table}

\FloatBarrier
\end{document}